\newif\ifallannexe
\newcommand{\figwidth}{0.85\columnwidth}
\newcommand{\headTab}[1]{ \textnormal{\textbf{#1}} }
\newcommand{\refapp}[1]{\cite[App. \ref{#1}]{Elvira2015TR}}
\def\bsI{{\boldsymbol{I}}}
\def\bfe{{\mathbf{e}}}
\def\bfh{{\mathbf{h}}}
\def\bfu{{\mathbf{u}}}
\def\bfx{{\mathbf{x}}}
\def\bfy{{\mathbf{y}}}
\def\bfH{{\mathbf{H}}}
\def\calU{{\mathcal{U}}}
\def\calD{{\mathcal{D}}}
\def\calG{{\mathcal{G}}}
\def\calI{{\mathcal{I}}}
\def\calK{{\mathcal{K}}}
\def\calN{{\mathcal{N}}}
\def\calU{{\mathcal{U}}}
\def\calX{{\mathcal{X}}}
\newcommand{\dimcoef}{N} \newcommand{\dimcoeff}{\dimcoef}
\newcommand{\indcoef}{n} \newcommand{\indcoeff}{\indcoef}
\newcommand{\dimobs}{M}
\newcommand{\indobs}{m}
\newcommand{\Vobs}{\bfy} \newcommand{\obs}[1]{y_{#1}}
\newcommand{\Vcoef}{\bfx} \newcommand{\coef}[1]{x_{#1}}
\newcommand{\Vcoeff}{\Vcoef} \newcommand{\coeff}[1]{\coef{#1}}
\newcommand{\MATop}{\bfH}
\newcommand{\Vnoise}{\bfe} \newcommand{\noise}[1]{e_{#1}}
\newcommand{\normq}[1]{\left\|#1\right\|^2_2}
\newcommand{\norminf}[1]{ \left\| #1 \right\|_{\infty} }
\newcommand{\abs}[1]{\left|#1\right|}
\newcommand{\sachant}{|}
\newcommand{\prox}{\textrm{prox}}
\newcommand{\demoparamini}{\lambda}
\newcommand{\demoparam}{\mu}
\newcommand{\indfunc}[2]{\mathbf{1}_{#1}{\left(#2\right)}}
\newcommand{\transp}{^T}
\newcommand{\R}{\mathbb{R}}
\newcommand{\cone}{\mathcal{C}}
\newcommand{\dsgamma}{d\calG}
\newcommand{\noisesymb}{\sigma}
\newcommand{\noisevar}{\noisesymb^2}
\newtheorem{remark}{Remark}
\newtheorem{lemma}{Lemma}
\newtheorem{definition}{Definition}
\newtheorem{property}{Property}
\newcommand{\paramvect}{\boldsymbol{\theta}}
\newcommand{\FITRAa}{{FITRA-mmse}} 
\newcommand{\FITRAc}{{FITRA-snr}} 
\newcommand{\FITRAd}{{FITRA-papr}} 
\newcommand{\mmse}{MMSE}
\newcommand{\mapm}{mMAP}
\newcommand{\MMSE}[1]{\hat{#1}_{\mathrm{MMSE}}}
\newcommand{\MAPm}[1]{\hat{#1}_{\mathrm{mMAP}}}
\newcommand{\Nbi}{T_{\mathrm{bi}}}
\newcommand{\Nmc}{T_{\mathrm{MC}}}
\newcommand{\Nr}{T_{r}}
\newcommand{\snry}{{\text{SNR}}_{\Vobs}}
\newcommand{\snrx}{{\text{SNR}}_{\Vcoef}}
\newcommand{\papr}{{\text{PAPR}}}
\title{Bayesian anti-sparse coding}
\author{Cl\'ement Elvira,~\IEEEmembership{Student Member,~IEEE}, Pierre Chainais,~\IEEEmembership{Member,~IEEE} \\and Nicolas Dobigeon,~\IEEEmembership{Senior Member,~IEEE} \\
\thanks{Cl\'ement Elvira and Pierre Chainais are with Univ. Lille, CNRS, Centrale Lille, UMR 9189 - CRIStAL - Centre de Recherche
en Informatique Signal et Automatique de Lille, F-59000 Lille, France (e-mail:
\{Clement.Elvira, Pierre.Chainais\}@ec-lille.fr ).}
\thanks{Nicolas Dobigeon is with the
University of Toulouse, IRIT/INP-ENSEEIHT, CNRS, 2 rue Charles
Camichel, BP 7122, 31071 Toulouse cedex 7, France (e-mail: Nicolas.Dobigeon@enseeiht.fr).}
\thanks{Part of this work has been funded thanks to the BNPSI ANR project no. ANR-13-BS-03-0006-01.}
}
\begin{document}

\maketitle

\begin{abstract}

Sparse representations have proven their efficiency in solving a wide class of inverse problems encountered in signal and image processing. Conversely, enforcing the information to be spread uniformly over representation coefficients exhibits relevant properties in various applications such as digital communications. Anti-sparse regularization can be naturally expressed through an $\ell_{\infty}$-norm penalty. This paper derives a probabilistic formulation of such representations. A new probability distribution, referred to as the democratic prior, is first introduced. Its main properties as well as three random variate generators for this distribution are derived. Then this probability distribution is used as a prior to promote anti-sparsity in a Gaussian linear inverse problem, yielding a fully Bayesian formulation of anti-sparse coding. Two Markov chain Monte Carlo (MCMC) algorithms are proposed to generate samples according to the posterior distribution. The first one is a standard Gibbs sampler. The second one uses Metropolis-Hastings moves that exploit the proximity mapping of the log-posterior distribution. These samples are used to approximate maximum a posteriori and minimum mean square error estimators of both parameters and hyperparameters. Simulations on synthetic data illustrate the performances of the two proposed samplers, for both complete and over-complete dictionaries. All results are compared to the recent deterministic variational FITRA algorithm.

\end{abstract}

\begin{IEEEkeywords}
democratic distribution, anti-sparse representation, proximal operator, inverse problem.
\end{IEEEkeywords}

\newpage
\section{Introduction}\label{sec:introduction}

\IEEEPARstart{S}{parse} representations have been widely advocated for as an efficient tool to address various problems encountered in signal and image processing. As an archetypal example, they were the core concept underlying most of the lossy data compression schemes, exploiting compressibility properties of natural signals and images over appropriate bases. Sparse approximations, generally resulting from a \emph{transform coding} process, lead for instance to the famous image, audio and video compression standards JPEG, MP3 and MPEG \cite{Candes2008,Davenport2012}. More recently and partly motivated by the advent of both the compressive sensing and dictionary learning paradigms, sparsity has been intensively exploited to regularize (e.g., linear) ill-posed inverse problems. The $\ell_0$-norm and the $\ell_1$-norm as its convex relaxation are among the most popular sparsity promoting penalties. Following the ambivalent interpretation of penalized regression optimization \cite{Gribonval2011}, Bayesian inference naturally offers an alternative and flexible framework to derive estimators associated with sparse coding problems. For instance, it is well known that a straightforward Bayesian counterpart of the LASSO shrinkage operator \cite{Tibshirani1996} can be obtained by adopting a Laplace prior \cite{Babacan2010}. Designing other sparsity inducing priors has motivated numerous research works. They generally rely on hierarchical mixture models \cite{Caron2008nips,Tipping2001,Lee2010,Themelis2012}, heavy tail distributions \cite{Ji2008,Tzikas2009,Fevotte2006} or Bernoulli-compound processes \cite{Dobigeon2009ip,Soussen2011,Chaari2014icassp}.

In contrast, the use of the $\ell_\infty$-norm within an objective criterion has remained somehow confidential in the signal processing literature. One may cite the minimax or Chebyshev approximation principle, whose practical implementation has been made possible thanks to the Remez exchange algorithm \cite{Remes1936} and leads to a popular design method of finite impulse response digital filters \cite{Parks1972,McClellan2005}. Besides, when combined with a set of linear equality constraints, minimizing a $\ell_\infty$-norm is referred to as the minimum-effort control problem in the optimal control framework \cite{Neustadt1962,Cadzow1971}. Much more recently, a similar problem has been addressed by Lyubarskii \emph{et al.} in \cite{Lyubarskii2010} where the \emph{Kashin's representation} of a given vector over a tight frame is introduced as the expansion coefficients with the smallest possible dynamic range. Spreading the information over representation coefficients in the most uniform way is a desirable feature in various applicative contexts, e.g., to design robust analog-to-digital conversion schemes \cite{Cvetkovic2003,Calderbank2002} or to reduce the peak-to-average power ratio (PAPR) in multi-carrier transmissions \cite{Farrell2009,Ilic2009}. Resorting to an uncertainty principle (UP), Lyubarskii \emph{et al.} have also introduced several examples of frames yielding computable Kashin's representations, such as random orthogonal matrices, random subsampled discrete Fourier transform (DFT) matrices, and random sub-Gaussian matrices \cite{Lyubarskii2010}. The properties of the alternate optimization problem, which consists of minimizing the maximum magnitude of the representation coefficients for an upper-bounded $\ell_2$-reconstruction error, have been deeply investigated in \cite{Studer2012Allerton,Studer2015sub}. In these latest contributions, the optimal expansion is called the \emph{democratic representation} and some bounds associated with archetypal matrices ensuring the UP are derived. In \cite{Fuchs2011asilomar}, the constrained signal representation problems considered in \cite{Lyubarskii2010} and \cite{Studer2015sub} is converted into their penalized counterpart. More precisely, the so-called \emph{spread} or \emph{anti-sparse representations} result from a variational optimization problem where the admissible range of the coefficients has been penalized through a $\ell_\infty$-norm
\begin{equation}
\label{eq:problem}
  \min_{\Vcoef \in \mathbb{R}^\dimcoef} \frac{1}{2\noisevar}\normq{\Vobs-\MATop\Vcoef} + \lambda \norminf{\Vcoef}.
\end{equation}
In \eqref{eq:problem}, $\MATop$ defines the $\dimobs\times\dimcoef$ representation matrix and $\noisevar$ stands for the variance of the residual resulting from the approximation. Again, the anti-sparse property brought by the $\ell_\infty$-norm penalization enforces the information brought by the measurement vector $\Vobs$ to be evenly spread over the representation coefficients in $\Vcoeff$ with respect to the dictionary $\MATop$. It is worth noting that recent applications have capitalized on these latest theoretical and algorithmic advances, including approximate nearest neighbor search \cite{Jegou2012icassp} and PAPR reduction \cite{Studer2013jsac}.

Surprisingly, up to our knowledge, no probabilistic formulation of these democratic representations has been proposed in the literature. The present paper precisely attempts to fill this gap by deriving a Bayesian formulation of the anti-sparse coding problem \eqref{eq:problem} considered in \cite{Fuchs2011asilomar}. Note that this objective differs from the contribution in \cite{Tan2014} where a Bayesian estimator associated with an $\ell_\infty$-norm loss function has been introduced. Instead, we merely introduce a Bayesian counterpart of the variational problem \eqref{eq:problem}. The main motivations for deriving the proposed Bayesian strategy for anti-sparse coding are threefold. Firstly, Bayesian inference is a flexible methodology that may allow other parameters and hyperparameters (e.g., residual variance $\noisevar$, regularization parameters $\lambda$) to be jointly estimated with the parameter of interest $\Vcoeff$. Secondly, through the choice of the considered Bayes risk, it permits to define a wide class of estimators, beyond the traditional penalized maximum likelihood estimator resulting from the solution of \eqref{eq:problem}. Finally, within this framework, Markov chain Monte Carlo algorithms can be conveniently designed to generate samples according to the posterior distribution. Contrary to deterministic optimization algorithms which provide only one point estimate, these samples can be subsequently used to build a comprehensive statistical description of the solution.

To this purpose, a new probability distribution as well as its main properties are introduced in Section~\ref{sec:democratic}. In particular, we show that $p\left(\Vcoef\right) \propto \exp\left(-\lambda \norminf{\Vcoef}\right)$ properly defines a probability density function (pdf). In Section \ref{sec:inverse_pb}, this so-called \emph{democratic distribution} is used as a prior distribution in a linear Gaussian inverse problem, which provides a straightforward equivalent of the problem \eqref{eq:problem} under the maximum \emph{a posteriori} paradigm. Moreover, exploiting  relevant properties of the democratic distribution, this section describes two Markov chain Monte Carlo (MCMC) algorithms as alternatives to the deterministic solvers proposed in \cite{Fuchs2011asilomar,Studer2015sub}. The first one is a standard Gibbs sampler which sequentially generates samples according to the conditional distributions associated with the joint posterior distribution. The second MCMC algorithm relies on a proximal Monte Carlo step recently introduced in \cite{Pereyra2015prox}. This step exploits the proximal operator associated to the logarithm of the target distribution to sample random vectors asymptotically distributed according to this non-smooth density. Section \ref{sec:experiments} illustrates the performances of the proposed algorithms on numerical experiments. Concluding remarks are reported in Section \ref{sec:conclusion}.

\begin{table}[h!]
\renewcommand{\arraystretch}{1.1}
\renewcommand\tabcolsep{1pt}
	\caption{List of symbols.}
	\label{tab:list_Symbols}
	\centering
	\begin{tabular}{c l}
		\toprule[1.5pt]
		\textbf{Symbol} & \textbf{Description}   \\
		\midrule
		$\dimcoef$, $\indcoef$ 		& Dimension, index of representation vector \\
		$\dimobs$, $\indobs$  		& Dimension, index of observed vector \\
		$\Vcoef$, $\coef{\indcoef}$ & Representation vector, its $\indcoef^{\text{th}}$ component  \\
		$\Vobs$, $\obs{\indobs}$ 	& Observation vector, its $\indobs^{\text{th}}$ component \\
		$\MATop$ 					& Coding matrix \\
		$\Vnoise$   				& Additive noise vector\\
		$\lambda$ 					& Parameter of the democratic distribution \\
		$\mu$ 						& Re-parametrization of $\lambda$ such that $\lambda = \dimcoef \mu$ \\
		$\calD_\dimcoef(\lambda)$ 	& Democratic distribution of parameter $\lambda$ over $\R^{\dimcoef}$ \\
		$C_\dimcoef(\lambda)$ 		& Normalizing constant of the distribution $\calD_\dimcoef(\lambda)$\\
		$\calK_J$ 					& A $J$-element subset $\left\{ i_1 \dots i_J \right\}$ of $\left\{1,\ldots, \dimcoef\right\}$ \\
		$\calU$, $\calG$, $\calI\calG$	& Uniform, gamma and inverse gamma distributions \\
		$\dsgamma$ 					& Double-sided gamma distribution \\
		$ \calN_{\calI}$		    & Truncated Gaussian distribution over $\calI$ \\
		$\cone_{\indcoef}$			& Double convex cones partitioning $\R^{\dimcoef}$\\
		\multirow{2}{*}{$c_\indcoeff$, $\calI_{\indcoeff}$}  & Weights and intervals defining the \\
                                    & \qquad conditional distribution $p\left(\coef{\indcoef} | \Vcoeff_{\backslash\indcoef}\right)$\\
		$g$, $g_1$, $g_2$			& Negative log-distribution ($g = g_1+g_2$) \\
		$\delta$					& Parameter of the proximity operator \\
		\multirow{2}{*}{$\varepsilon_j$, $d_j$,   $\phi_{\delta}(\Vcoeff)$} & Family of distinct values of $\vert \Vcoef \vert$, their respective multiplicity  \\
									& \qquad and family of local maxima of $\prox_{\lambda \norminf{\cdot}}^{\delta}$\\
		$q(\cdot|\cdot)$			& Proposal distribution	\\
		\multirow{2}{*}{$\omega_{i\indcoef}, \mu_{i\indcoef}, s_{\indcoef}^2$, $\calI_{i\indcoef}$} & Weights, parameters and intervals defining the \\
                                    & \qquad conditional  distribution $p\left(\coef{\indcoef} | \Vcoeff_{\backslash\indcoef},\demoparam,\noisevar,\Vobs\right)$ \\
		\bottomrule[1.5pt]
	\end{tabular}
\end{table}

\section{Democratic distribution}
\label{sec:democratic}

This section introduces the democratic distribution and the main properties related to its marginal and conditional distributions. Finally, two random variate generators are proposed. Note that, for sake of conciseness, the proofs associated with the following results are reported in the companion report \cite{Elvira2015TR}.

\subsection{Probability density function}

\begin{lemma}
\label{lem:integrability}
	Let $\Vcoef\in\mathbb{R}^{\dimcoef}$ and $\lambda\in\mathbb{R}_+$. The integral of the function $\exp\left(-\lambda\norminf{\Vcoef}\right)$ over $\mathbb{R}^{\dimcoef}$ is properly defined and the following equality holds
	\begin{equation*}
	 \int_{\mathbb{R}^\dimcoef}\exp\left(-\lambda\norminf{\Vcoef}\right)d\Vcoef = \dimcoef!\left(\frac{2}{\lambda}\right)^\dimcoef.
	\end{equation*}
\end{lemma}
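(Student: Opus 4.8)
The strategy is to compute the integral by exploiting the geometry of sublevel sets of the $\ell_\infty$-norm combined with Fubini's theorem. First I would observe that $\exp\left(-\lambda\norminf{\Vcoef}\right)$ is a positive measurable function, so the integral is well-defined in $[0,+\infty]$; establishing finiteness will come for free once the value is computed. The cleanest route is the ``layer-cake'' (distribution function) representation: write
\begin{equation*}
  \int_{\mathbb{R}^\dimcoef}\exp\left(-\lambda\norminf{\Vcoef}\right)d\Vcoef = \int_0^\infty \mathrm{Leb}\left(\left\{\Vcoef\in\mathbb{R}^\dimcoef : \exp\left(-\lambda\norminf{\Vcoef}\right) > t\right\}\right)dt.
\end{equation*}
The condition $\exp\left(-\lambda\norminf{\Vcoef}\right) > t$ is equivalent to $\norminf{\Vcoef} < -\frac{1}{\lambda}\log t$, i.e. to $\Vcoef$ lying in an $\ell_\infty$-ball, which is a hypercube of side $2r$ with $r = -\frac{1}{\lambda}\log t$ (nonempty only for $t<1$). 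Its Lebesgue measure is $(2r)^\dimcoef$, so after the substitution $r = -\frac{1}{\lambda}\log t$, hence $t = e^{-\lambda r}$ and $dt = -\lambda e^{-\lambda r}dr$, the integral becomes $\int_0^\infty (2r)^\dimcoef \lambda e^{-\lambda r}\,dr = 2^\dimcoef \lambda \int_0^\infty r^\dimcoef e^{-\lambda r}\,dr$.

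The remaining step is the standard Gamma integral $\int_0^\infty r^\dimcoef e^{-\lambda r}\,dr = \dimcoef!/\lambda^{\dimcoef+1}$, which gives $2^\dimcoef \lambda \cdot \dimcoef!/\lambda^{\dimcoef+1} = \dimcoef!\left(2/\lambda\right)^\dimcoef$, as claimed. An equivalent and perhaps more elementary alternative, avoiding the layer-cake formula, is to pass to the variable $t = \norminf{\Vcoef}$ directly: the ``$\ell_\infty$-sphere'' $\{\norminf{\Vcoef}=t\}$ is the boundary of a cube of side $2t$, whose $(\dimcoef-1)$-dimensional surface measure is $2\dimcoef(2t)^{\dimcoef-1}$ (it has $2\dimcoef$ faces, each of $(\dimcoef-1)$-volume $(2t)^{\dimcoef-1}$); then $\int_{\mathbb{R}^\dimcoef} f(\norminf{\Vcoef})d\Vcoef = \int_0^\infty f(t)\,2\dimcoef(2t)^{\dimcoef-1}\,dt$, and with $f(t)=e^{-\lambda t}$ one again reduces to a Gamma integral.

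The only real subtlety — and the step I would be most careful about — is justifying the change of variables / co-area formula for the non-smooth norm $\norminf{\cdot}$; this is why I favor the layer-cake argument, since it reduces everything to computing volumes of hypercubes, which is elementary and sidesteps any differentiability concerns. Finiteness of the integral, and hence that $C_\dimcoef(\lambda)^{-1}\exp(-\lambda\norminf{\cdot})$ is a bona fide pdf with $C_\dimcoef(\lambda) = \dimcoef!\left(2/\lambda\right)^\dimcoef$, then follows immediately.
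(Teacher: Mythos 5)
Your argument is correct: the layer-cake identity applies to any nonnegative measurable function, the superlevel sets of $\exp\left(-\lambda\norminf{\Vcoef}\right)$ are open hypercubes of volume $(2r)^\dimcoef$ with $r=-\lambda^{-1}\log t$, and the substitution reduces everything to the Gamma integral $\int_0^\infty r^{\dimcoef}e^{-\lambda r}\,dr=\dimcoef!/\lambda^{\dimcoef+1}$, yielding $\dimcoef!\left(2/\lambda\right)^{\dimcoef}$; well-definedness is indeed automatic from nonnegativity and measurability. The route is genuinely different from the paper's, though. The paper (in its companion report) works with the partition of $\mathbb{R}^{\dimcoef}$ into the $\dimcoef$ double cones $\cone_{\indcoeff}$ on which a prescribed coordinate dominates: on $\cone_{\indcoeff}$ one has $\norminf{\Vcoef}=\abs{\coef{\indcoef}}$, the $\dimcoef-1$ remaining coordinates integrate out to $(2\abs{\coef{\indcoef}})^{\dimcoef-1}$, and one is left with $\dimcoef\cdot 2^{\dimcoef}\int_0^\infty u^{\dimcoef-1}e^{-\lambda u}\,du=\dimcoef!\left(2/\lambda\right)^{\dimcoef}$ --- the same Gamma integral, reached by a Fubini/cone decomposition rather than by slicing in the range. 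Your version is arguably the more self-contained proof of this single lemma, and it cleanly sidesteps the non-smoothness of $\norminf{\cdot}$ that makes a naive co-area argument delicate (your second, ``surface of the cube'' variant is really just the derivative of the volume $(2t)^{\dimcoef}$ in disguise, so it is equally elementary). What the cone decomposition buys, and the reason the paper adopts it, is reusability: the same splitting of the domain immediately delivers the equiprobability $\mathrm{P}\left[\Vcoeff\in\cone_{\indcoeff}\right]=1/\dimcoef$, the double-sided Gamma law of the dominant component, and the marginal and conditional densities computed later, none of which follow from the layer-cake representation. So your proof is a valid and slightly more economical substitute for this lemma in isolation, but it does not set up the machinery the rest of Section~\ref{sec:democratic} relies on.
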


\begin{proof}
\ifallannexe
	See Appendix~\refapp{app:integrability}. 
\else
	See Appendix~\cite[App. A]{Elvira2015TR}.
\fi

\end{proof}

As a corollary of  Lemma \ref{lem:integrability}, the democratic distribution can be defined as follows.
\begin{definition}
	A $\dimcoef$-real-valued random vector $\Vcoef\in\mathbb{R}^{\dimcoef}$ is said to be distributed according to the democratic distribution $\calD_\dimcoef(\lambda)$, namely $\Vcoef\sim\calD_\dimcoef(\lambda)$, when the corresponding pdf is
	\begin{equation}
		\label{eq:democratic}
		p\left(\Vcoef\right) = \frac{1}{ C_\dimcoef(\lambda)} \exp\left(-\lambda\norminf{\Vcoef}\right)
	\end{equation}
	with $C_\dimcoef(\lambda)\triangleq \dimcoef! \left(\frac{2}{\lambda}\right)^\dimcoef $.
\end{definition}

\begin{figure}[h!]
\centering
 \includegraphics[width=0.45\textwidth, height=0.22\textwidth]{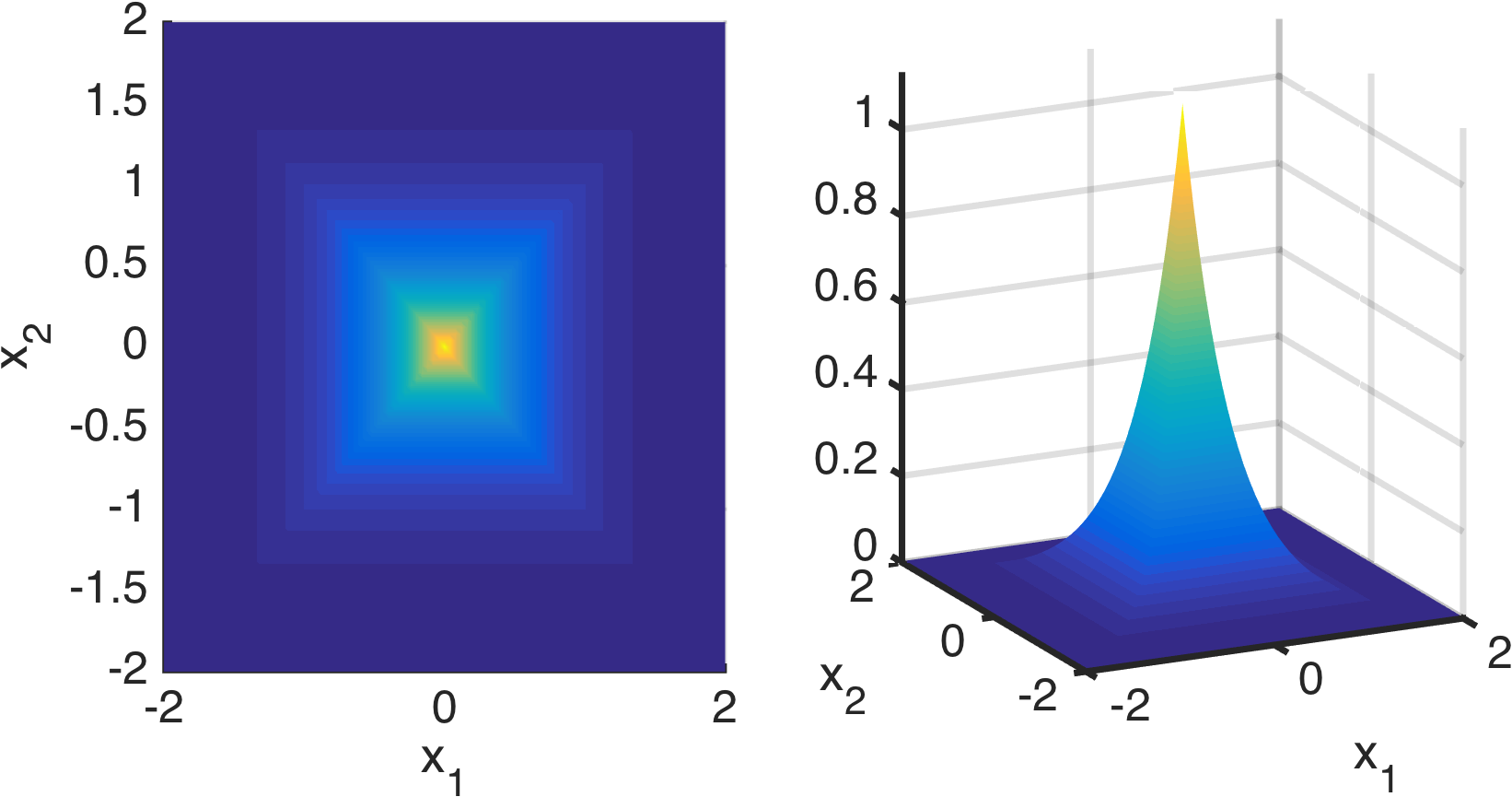}
\caption{The democratic pdf $\calD_\dimcoeff\left(\lambda\right)$ for $\dimcoeff=2$ and $\lambda=3$.} \label{fig:democratic_pdf}
\end{figure}

As an illustration, the pdf of the bidimensional democratic pdf for $\lambda=3$ is depicted in Fig. \ref{fig:democratic_pdf}.

\begin{remark}
It is interesting to note that the democratic distribution belongs to the exponential family. Indeed, its pdf can be factorized as
\begin{equation}
	p\left( \Vcoef \right) = a(\Vcoef) \; b(\lambda) \; \exp \left( \eta(\lambda) T(\Vcoef) \right)
\end{equation}
where $a(\Vcoef) = 1$, $b(\lambda) = 1\slash C_\dimcoef(\lambda)$, $\eta(\lambda) = - \lambda$ and $T(\Vcoef) = \norminf{\Vcoef}$ defines sufficient statistics.
\end{remark}

\subsection{Moments}

The two first moments of the democratic distribution are available through the following property.

\begin{property}
Let $\Vcoef = \left[\coef{1},\ldots,\coef{\dimcoef}\right]\transp$ be a random vector obeying the democratic distribution  $\calD_\dimcoef(\lambda)$. The mean and the covariance matrix are given by:
\begin{align}
\mathrm{E} \left[ \coef{\indcoef} \right] & = 0& \quad \forall \indcoef \in \{1, \ldots,\dimcoef\}\label{eq:mean} \\
 \mathrm{var}\left[ \coef{\indcoef} \right] & = \frac{(\dimcoef+1)(\dimcoef+2)}{3 \lambda^2} &\quad \forall \indcoef \in \{1, \ldots,\dimcoef\} \label{eq:var} \\
	\mathrm{cov}\left[ \coef{i}, \coef{j} \right] &= 0 \quad &\forall i\neq j. \label{eq:cov}
\end{align}

\end{property}

\begin{proof}

\ifallannexe
	See Appendix~\ref{app:moments}.
\else
	See Appendix~\cite[App. B]{Elvira2015TR}.
\fi
\end{proof}

\subsection{Marginal distributions}

The marginal distributions of any democratically distributed vector $\Vcoeff$ are given by the following Lemma.

\begin{lemma} \label{lem:marginal_subset}
	Let $\Vcoef = \left[\coef{1},\ldots,\coef{\dimcoef}\right]\transp$ be a random vector obeying the democratic distribution  $\calD_\dimcoef(\lambda)$. For any positive integer $J<\dimcoeff$, let $\calK_J$  
		denote a $J$-element subset of $\left\{1,\ldots,\dimcoeff\right\}$  and $\Vcoef_{\backslash \calK_J}$ the sub-vector of $\Vcoeff$ whose $J$ elements indexed by $\calK_J$ have been removed. Then the marginal pdf of the sub-vector $\Vcoef_{\backslash\calK_J} \in \R^{\dimcoeff-J}$ is given by
	\begin{multline}
		p\left(\Vcoef_{\backslash\calK_J}\right) = \frac{2^J}{C_{\dimcoeff}(\lambda)} \sum_{j=0}^{J} \binom{J}{j} \frac{(J-j)!}{\lambda^{J-j}} \norminf{\Vcoef_{\backslash\calK_J}}^j \\ \times \exp\left(-\lambda\norminf{\Vcoef_{\backslash\calK_J}}\right).
	\end{multline}
\end{lemma}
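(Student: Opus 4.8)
The plan is to use the permutation invariance of $\norminf{\cdot}$ to reduce the problem to integrating out a fixed block of coordinates, and then to evaluate that integral explicitly. Since the density \eqref{eq:democratic} depends on $\Vcoef$ only through $\norminf{\Vcoef}$, it is invariant under permutations of the entries, so we may assume $\calK_J=\{\dimcoeff-J+1,\dots,\dimcoeff\}$. Writing $\Vcoef=(\Vcoef_{\backslash\calK_J},\mathbf{v})$ with $\mathbf{v}\in\R^{J}$ and setting $r\triangleq\norminf{\Vcoef_{\backslash\calK_J}}$, we have $\norminf{\Vcoef}=\max\!\left(r,\norminf{\mathbf{v}}\right)$, hence, integrating out $\mathbf{v}$,
\[
 p\!\left(\Vcoef_{\backslash\calK_J}\right)=\frac{1}{C_{\dimcoeff}(\lambda)}\int_{\R^{J}}\exp\!\left(-\lambda\max\!\left(r,\norminf{\mathbf{v}}\right)\right)d\mathbf{v},
\]
the integral being finite by the $J$-dimensional instance of Lemma~\ref{lem:integrability}.

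To evaluate the right-hand side I would split $\R^{J}$ into the $\ell_\infty$-ball $\mathcal{C}_r\triangleq[-r,r]^{J}=\{\mathbf{v}:\norminf{\mathbf{v}}\le r\}$ and its complement. On $\mathcal{C}_r$ the integrand is the constant $e^{-\lambda r}$ and contributes $(2r)^{J}e^{-\lambda r}$; on the complement it equals $e^{-\lambda\norminf{\mathbf{v}}}$. A layer-cake argument in the radial variable $t=\norminf{\mathbf{v}}$ — using that $\{\norminf{\mathbf{v}}\le t\}$ has volume $(2t)^{J}$, so the associated shell measure is $2^{J}Jt^{J-1}\,dt$ — turns the complement integral into
\[
 \int_{\{\norminf{\mathbf{v}}>r\}}e^{-\lambda\norminf{\mathbf{v}}}\,d\mathbf{v}=2^{J}J\int_{r}^{\infty}t^{J-1}e^{-\lambda t}\,dt .
\]
The scalar integral on the right is an upper incomplete gamma integral; $J-1$ successive integrations by parts (equivalently a one-line induction on $J$) give $\int_{r}^{\infty}t^{J-1}e^{-\lambda t}\,dt=(J-1)!\,\lambda^{-J}e^{-\lambda r}\sum_{k=0}^{J-1}(\lambda r)^{k}/k!$.

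Adding the ball and complement contributions, pulling out $2^{J}e^{-\lambda r}$, and using the identity $\binom{J}{j}(J-j)!=J!/j!$ to absorb the constant term $r^{J}$ (the index $j=J$) into the incomplete-gamma sum (the indices $j=0,\dots,J-1$), one obtains $\int_{\R^{J}}\exp(-\lambda\max(r,\norminf{\mathbf{v}}))\,d\mathbf{v}=2^{J}e^{-\lambda r}\sum_{j=0}^{J}\binom{J}{j}\frac{(J-j)!}{\lambda^{J-j}}r^{j}$, and dividing by $C_{\dimcoeff}(\lambda)$ produces exactly the announced formula. The only mildly delicate steps are the rigorous justification of the layer-cake reduction (equivalently, an explicit change of variables peeling off the successive $\ell_\infty$-shells) and the bookkeeping in the incomplete-gamma expansion; the remainder is routine algebra. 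A fully equivalent alternative is a direct induction on $J$, integrating out one coordinate at a time, with base case the elementary identity $\int_{\R}e^{-\lambda\max(r,|y|)}\,dy=2e^{-\lambda r}(r+\lambda^{-1})$ and an inductive step that integrates the degree-$J$ polynomial-times-exponential marginal against one further coordinate using the same ball/complement split.
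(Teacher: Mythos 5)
Your proof is correct: the reduction by permutation invariance to $\int_{\R^{J}}\exp\left(-\lambda\max\left(r,\norminf{\mathbf{v}}\right)\right)d\mathbf{v}$, the split into the ball $[-r,r]^{J}$ (contributing $(2r)^{J}e^{-\lambda r}$, i.e.\ the $j=J$ term) and its complement handled by the $\ell_\infty$-radial change of variables with shell measure $2^{J}Jt^{J-1}dt$, and the integer-order incomplete-gamma expansion all check out, and the identity $\binom{J}{j}(J-j)!=J!/j!$ reassembles the announced sum exactly. This is essentially the same direct-marginalization computation as the paper's (whose proof is deferred to the companion report), so nothing further is needed.
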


\begin{proof}
\ifallannexe
	See Appendix \ref{app:marginal}.
\else
	\cite[App. C]{Elvira2015TR}
\fi
\end{proof}

In particular, as a straightforward corollary of this lemma, two specific marginal distributions of $\calD_\dimcoeff(\lambda)$ are given by the following property.

\begin{property}
Let $\Vcoef = \left[\coef{1},\ldots,\coef{\dimcoef}\right]\transp$ be a random vector obeying the democratic distribution  $\calD_\dimcoef(\lambda)$. The components $\coeff{\indcoef}$ ($\indcoeff=1,\ldots,\dimcoeff$) of $\Vcoef$ are identically and marginally distributed according to the following $\dimcoeff$-component mixture of double-sided Gamma distributions\footnote{The double-sided Gamma distribution $\dsgamma\left(a,b\right)$ is defined as a generalization over $\mathbb{R}$ of the standard Gamma distribution $\calG\left(a,b\right)$ with the  pdf $p(x) = \frac{b^{a}}{2\Gamma(b)}\abs{x}^{a-1} \exp\left(-b \abs{x}\right)$.}
			\begin{equation}
			\label{eq:marginal}
				\coef{\indcoef}  \sim \frac{1}{\dimcoef}\sum_{j=1}^{\dimcoef} \dsgamma\left(j,\lambda\right).
			\end{equation}
Moreover, the pdf of the sub-vector  $\Vcoef_{\backslash\indcoef}$ of $\Vcoeff$ whose $\indcoeff$th element has been removed is :
\begin{equation} \label{eq:marginalJustOneElement}
	p\left(\Vcoef_{\backslash\indcoef}\right) =   \frac{1 + \lambda \norminf{\Vcoef_{\backslash\indcoef}} }{\dimcoeff \;C_{\dimcoeff-1}(\lambda)}
		\exp\left(-\lambda\norminf{\Vcoef_{\backslash\indcoef}}\right).
\end{equation}
\end{property}

\begin{proof}
\ifallannexe
	See Appendix \ref{app:marginal}.
\else
	\cite[App. C]{Elvira2015TR}
\fi
\end{proof}

These two specific marginal distributions $p\left(\Vcoef_{\backslash\indcoef}\right)$ and $p\left(\coeff{\indcoeff}\right)$ are depicted in Fig.~\ref{fig:fig_democratic_pdf_marginal_subvector} (top and bottom, right).

\begin{figure}[h!]
\centering
 \includegraphics[width=0.45\textwidth, height=0.22\textwidth]{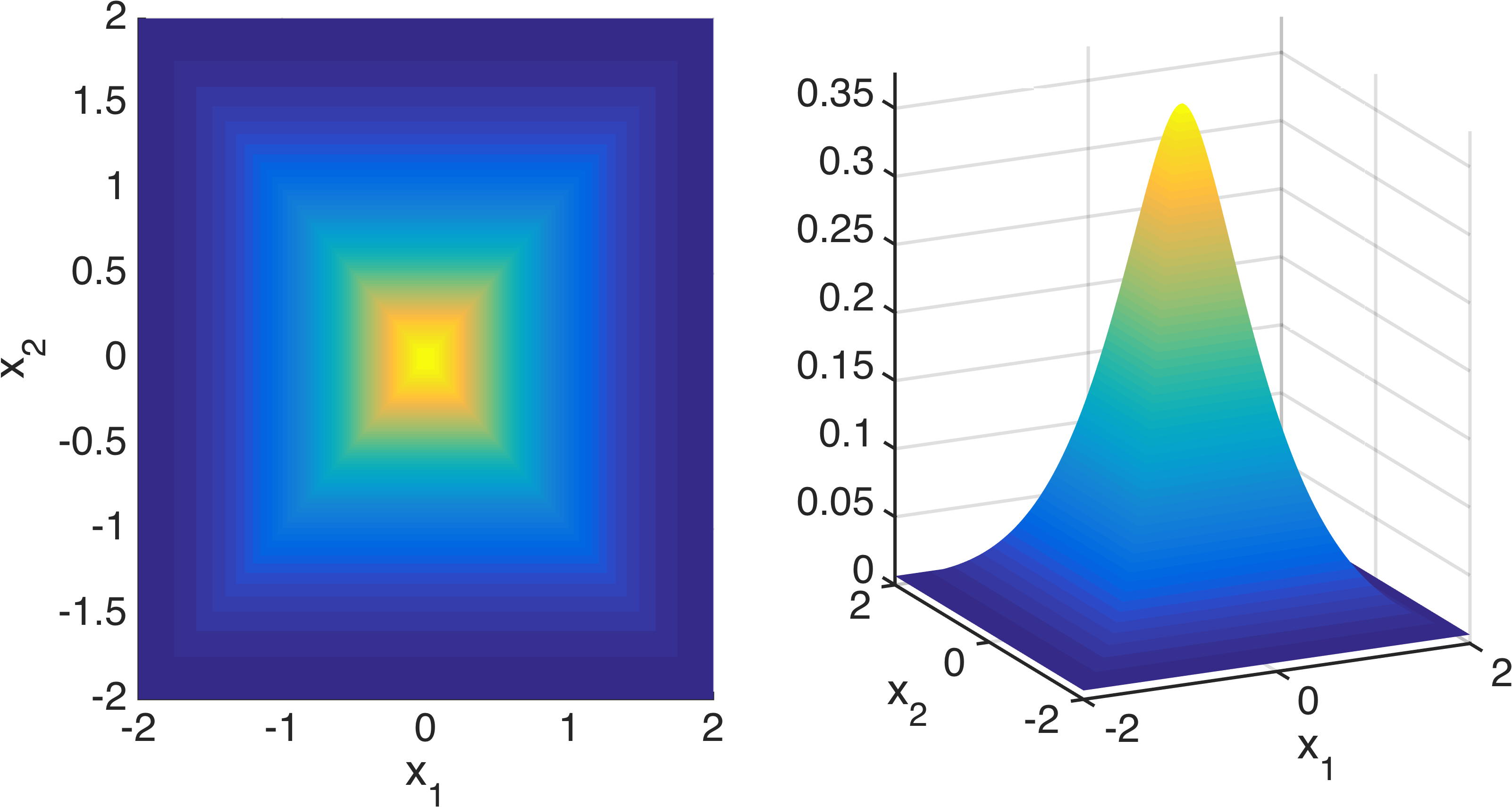}\\
 \includegraphics[width=0.45\textwidth, height=0.22\textwidth]{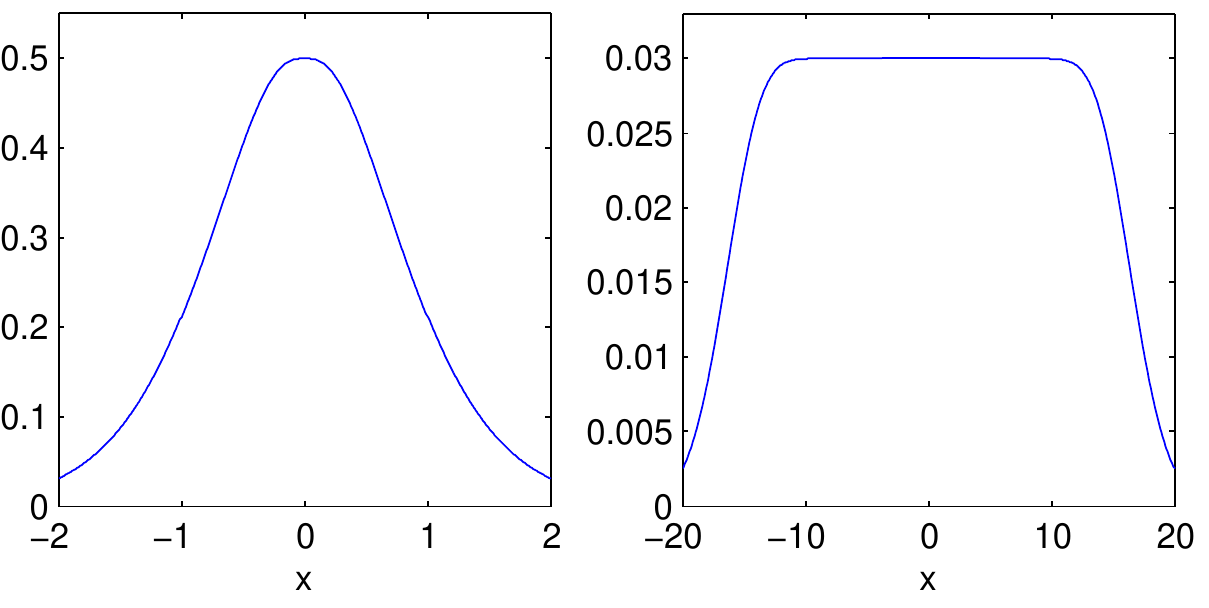}
\caption{Top: marginal distribution of $\Vcoef_{\backslash\indcoef}$ when $\Vcoeff\sim\calD_\dimcoeff\left(\lambda\right)$ for $\dimcoeff=3$ and $\lambda=3$.
Bottom: marginal distribution of $\coef{\indcoef}$ when $\Vcoeff\sim\calD_\dimcoeff\left(\lambda\right)$ for $\dimcoeff=3$ (left) or $\dimcoeff=50$ (right) and  $\lambda=3$.} \label{fig:fig_democratic_pdf_marginal_subvector}
\end{figure}

\begin{remark}
	It is worth noting that the distribution in \eqref{eq:marginal} can be rewritten as
	\begin{equation*}
		p\left( \coef{\indcoef}\right) = \frac{\lambda}{2\dimcoeff} \left(\sum_{j=0}^{\dimcoeff-1}\frac{\lambda^j}{j!}\abs{\coeff{\indcoeff}}^j\right)\exp\left(-\lambda\abs{\coeff{\indcoeff}}\right)
	\end{equation*}
	which behaves as $p\left( \coef{\indcoef}\right)\approx \frac{\lambda}{2\dimcoef}$ when $\dimcoef \rightarrow +\infty$. This means that the components of $\Vcoef$ tend to be marginally distributed according to uniform distributions over $\mathbb{R}$ in high dimension. This behavior is depicted in Fig.~\ref{fig:fig_democratic_pdf_marginal_subvector} bottom-right.
\end{remark}

\subsection{Conditional distributions}

Before introducing conditional distributions associated with any democratically distributed random vector, let partition $\R^{\dimcoeff}$ into a set of $\dimcoeff$ non-overlapping double-convex cones $\cone_{\indcoeff} \subset \R^{\dimcoeff}$ ($\indcoeff=1,\ldots,\dimcoeff$) defined by
\begin{equation}
\label{eq:cones}
	\cone_{\indcoeff} \triangleq \left\{\bfx=[x_1,\ldots,x_{\dimcoeff}]^T \in  \R^{\dimcoeff}\  : \
	 \forall j\neq \indcoeff,\  \abs{x_\indcoeff} > \abs{x_j}\right\}.
\end{equation}
These sets are directly related to the index of the so-called \emph{dominant component} of a given democratically distributed vector $\Vcoeff$. More precisely, if $\norminf{\bfx}=\abs{x_{\indcoeff}}$, then $\bfx \in \cone_{\indcoeff}$ and the $\indcoeff^{\text{th}}$ component $x_{\indcoeff}$ of $\bfx$ is said to be the dominant component.

An example is given in Fig. \ref{fig:cone} where $\cone_{1} \subset \R^{2}$ is depicted. These double-cones partition $\mathbb{R}^\dimcoeff$ into $\dimcoeff$ equiprobable sets with respect to (w.r.t.) the democratic distribution, as stated in the following property.

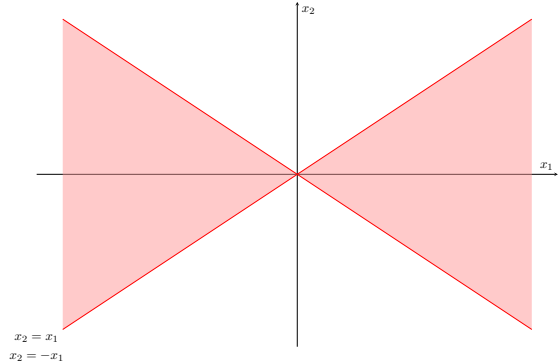
\begin{figure}[h!]
\centering
 \scalebox{.45}{\definecolor{mycolor1}{rgb}{0.00000,0.44700,0.74100}%
\definecolor{mycolor2}{rgb}{0.00000,0.75000,1.00000}%
\definecolor{mycolor3}{rgb}{0.89000,0.44000,0.48000}%

\definecolor{lightred}{rgb}{1.00,0.59,0.59}%

\definecolor{alizarin}{rgb}{0.82, 0.1, 0.26}
\definecolor{denim}{rgb}{0.08, 0.38, 0.74}

\begin{tikzpicture}[
        hatch distance/.store in=\hatchdistance,
        hatch distance=10pt,
        hatch thickness/.store in=\hatchthickness,
        hatch thickness=2pt
    ]

    \makeatletter
    \pgfdeclarepatternformonly[\hatchdistance,\hatchthickness]{flexible hatch}
    {\pgfqpoint{0pt}{0pt}}
    {\pgfqpoint{\hatchdistance}{\hatchdistance}}
    {\pgfpoint{\hatchdistance-1pt}{\hatchdistance-1pt}}%
    {
        \pgfsetcolor{\tikz@pattern@color}
        \pgfsetlinewidth{\hatchthickness}
        \pgfpathmoveto{\pgfqpoint{0pt}{0pt}}
        \pgfpathlineto{\pgfqpoint{\hatchdistance}{\hatchdistance}}
        \pgfusepath{stroke}
    }

\begin{axis}[%
width=6in,
height=4in,
at={(0, 0)},
scale only axis,
unbounded coords=jump,
clip=false,
xmin=-1,
xmax=1,
ymin=-1,
ymax=1,
xtick={\empty},
ytick={\empty},
xlabel={$x_1$},
ylabel={$x_2$},
axis lines=middle
]

\addplot+[mark=none,
	domain=-0.9:0.9,
	samples=100,
    fill=lightred,
    draw=none,
    opacity=0.5,
	]{x}\closedcycle;
\addplot+[mark=none,
	domain=-0.9:0.9,
	samples=100,
    fill=lightred,
    draw=none,
    opacity=0.5,
	]{-x}\closedcycle;

\addplot [color=red,solid,forget plot] table[row sep=crcr]{%
-0.9	-0.9 \\
0.9	0.9 \\
} node[at end, above, black] {$\coeff{2}=\coeff{1}$};

\addplot [color=red,solid,forget plot] table[row sep=crcr]{%
-0.9	0.9 \\
0.9	-0.9 \\
} node[at end, below, black] {$\coeff{2}=-\coeff{1}$};





\end{axis}
\end{tikzpicture}
\caption{The double-cone $\cone_{1}$ of $\R^2$ appears as the light red area while the complementary double-cone $\cone_{2}$ is the uncolored area.} \label{fig:cone}
\end{figure}

\begin{property}
\label{prop:dominant_proba}
	Let $\Vcoef = \left[\coef{1},\ldots,\coef{\dimcoef}\right]\transp$ be a random vector obeying the democratic distribution  $\calD_\dimcoef(\lambda)$. Then the probability that this vector belongs to a given double-cone is
	\begin{equation}
	\label{eq:dominant_proba}
		\mathrm{P}\left[\Vcoeff \in \cone_{\indcoeff}\right] = \frac{1}{\dimcoeff}.
	\end{equation}
\end{property}

\begin{proof}
\ifallannexe
	See Appendix \ref{app:conditional}, paragraph \ref{app:conditional:sec:dominant_proba}.
\else
	See Appendix~\cite[App. D]{Elvira2015TR} paragraph D-A.
\fi
\end{proof}

\begin{remark}
	This property simply exhibits intuitive intrinsic symmetries of the democratic distribution: the dominant component of a democratically distributed vector is located with equal probabilities in any of the cones $\cone_{\indcoeff}$.
\end{remark}

Moreover, the following lemma yields some results on conditional distributions related to these sets.

\begin{lemma}
\label{lem:dominant_pdf}
	Let $\Vcoef = \left[\coef{1},\ldots,\coef{\dimcoef}\right]\transp$ be a random vector obeying the democratic distribution  $\calD_\dimcoef(\lambda)$. Then the following results hold
	\begin{eqnarray}
		\coeff{\indcoeff}|\Vcoeff \in \cone_{\indcoeff} &\sim& \dsgamma\left(\dimcoeff,\lambda\right) \label{eq:dominant_pdf}\\
		\Vcoef_{\backslash\indcoef}| \Vcoeff \in \cone_{\indcoeff} &\sim& \calD_{\dimcoeff-1}(\lambda) \label{eq:nondominant_pdf}\\
		\coeff{j}|\coeff{\indcoeff}, \Vcoeff \in \cone_{\indcoeff} &\sim& \calU\left(-\abs{\coeff{\indcoeff}},\abs{\coeff{\indcoeff}}\right)\quad (j\neq \indcoeff) \label{eq:dominant_proba_cond}
	\end{eqnarray}
	and
	\begin{eqnarray}
		\mathrm{P}\left[\Vcoeff \in \cone_{\indcoeff} | \Vcoef_{\backslash\indcoef}\right] &=& \frac{1}{1+\lambda \norminf{\Vcoef_{\backslash\indcoef}}} \label{eq:dominant_proba_cond_nondominant}\\
		p\left(\Vcoef_{\backslash\indcoef}|\Vcoeff \not\in \cone_{\indcoeff} \right) &=& \frac{\lambda }{\dimcoef-1}  \frac{\norminf{\Vcoef_{\backslash\indcoef}}}{C_{\dimcoeff-1}(\lambda)} 
		e^{-\lambda \norminf{\Vcoef_{\backslash\indcoef}}}. \label{eq:proba_marg_noncone}
	\end{eqnarray}
\end{lemma}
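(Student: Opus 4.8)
The plan is to derive all five displayed identities in Lemma~\ref{lem:dominant_pdf} from the explicit density \eqref{eq:democratic}, the partition property (Property~\ref{prop:dominant_proba}), and the marginal formulas already established in Lemma~\ref{lem:marginal_subset} and the subsequent property. The unifying observation is that on the event $\{\Vcoeff \in \cone_{\indcoeff}\}$ one has $\norminf{\Vcoeff} = \abs{\coeff{\indcoeff}}$, so the joint density restricted to this cone factorizes nicely once we integrate out the non-dominant coordinates, which are simply constrained to the box $(-\abs{\coeff{\indcoeff}},\abs{\coeff{\indcoeff}})^{\dimcoeff-1}$.

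First I would prove \eqref{eq:dominant_pdf} and \eqref{eq:dominant_proba_cond} together. On $\cone_{\indcoeff}$, integrating $p(\Vcoeff) \propto \exp(-\lambda\abs{\coeff{\indcoeff}})$ over the $\dimcoeff-1$ non-dominant coordinates, each ranging over $(-\abs{\coeff{\indcoeff}},\abs{\coeff{\indcoeff}})$, yields a factor $(2\abs{\coeff{\indcoeff}})^{\dimcoeff-1}$, so the unnormalized conditional density of $\coeff{\indcoeff}$ is proportional to $\abs{\coeff{\indcoeff}}^{\dimcoeff-1}\exp(-\lambda\abs{\coeff{\indcoeff}})$; recognizing $\mathrm{P}[\Vcoeff\in\cone_{\indcoeff}]=1/\dimcoeff$ from Property~\ref{prop:dominant_proba} as the normalizing bookkeeping, this is exactly $\dsgamma(\dimcoeff,\lambda)$, proving \eqref{eq:dominant_pdf}. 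Since the integrand on $\cone_{\indcoeff}$ does not depend on the non-dominant coordinates at all, conditionally on $\coeff{\indcoeff}$ and on $\{\Vcoeff\in\cone_{\indcoeff}\}$ the vector $\Vcoeff_{\backslash\indcoeff}$ is uniform on the box $(-\abs{\coeff{\indcoeff}},\abs{\coeff{\indcoeff}})^{\dimcoeff-1}$, hence its components are i.i.d.\ $\calU(-\abs{\coeff{\indcoeff}},\abs{\coeff{\indcoeff}})$, which is \eqref{eq:dominant_proba_cond}.

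Next, for \eqref{eq:nondominant_pdf} I would marginalize the conditional joint density over $\coeff{\indcoeff}$: on $\cone_{\indcoeff}$ the density of $(\coeff{\indcoeff},\Vcoeff_{\backslash\indcoeff})$ is proportional to $\exp(-\lambda\abs{\coeff{\indcoeff}})\,\mathds{1}\{\abs{\coeff{\indcoeff}}>\norminf{\Vcoeff_{\backslash\indcoeff}}\}$, so integrating out $\coeff{\indcoeff}$ over $\{\abs{\coeff{\indcoeff}}>\norminf{\Vcoeff_{\backslash\indcoeff}}\}$ gives a factor $\tfrac{2}{\lambda}\exp(-\lambda\norminf{\Vcoeff_{\backslash\indcoeff}})$, i.e.\ the density of $\calD_{\dimcoeff-1}(\lambda)$ up to its normalizing constant; checking that the constant matches $C_{\dimcoeff-1}(\lambda)$ is a one-line verification. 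For \eqref{eq:dominant_proba_cond_nondominant}, I would apply Bayes' rule: $\mathrm{P}[\Vcoeff\in\cone_{\indcoeff}\mid\Vcoeff_{\backslash\indcoeff}] = p(\Vcoeff_{\backslash\indcoeff}\mid\Vcoeff\in\cone_{\indcoeff})\,\mathrm{P}[\Vcoeff\in\cone_{\indcoeff}]\,/\,p(\Vcoeff_{\backslash\indcoeff})$, plugging in \eqref{eq:nondominant_pdf}, Property~\ref{prop:dominant_proba}, and the marginal \eqref{eq:marginalJustOneElement}; the exponential factors cancel and the polynomial prefactors collapse to $1/(1+\lambda\norminf{\Vcoeff_{\backslash\indcoeff}})$. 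Finally \eqref{eq:proba_marg_noncone} follows from the complementary decomposition $p(\Vcoeff_{\backslash\indcoeff}) = p(\Vcoeff_{\backslash\indcoeff}\mid\Vcoeff\in\cone_{\indcoeff})\mathrm{P}[\Vcoeff\in\cone_{\indcoeff}] + p(\Vcoeff_{\backslash\indcoeff}\mid\Vcoeff\notin\cone_{\indcoeff})(1-\mathrm{P}[\Vcoeff\in\cone_{\indcoeff}])$, solving for the last term using the already known expressions for $p(\Vcoeff_{\backslash\indcoeff})$, $p(\Vcoeff_{\backslash\indcoeff}\mid\Vcoeff\in\cone_{\indcoeff})$ and $\mathrm{P}[\Vcoeff\in\cone_{\indcoeff}]=1/\dimcoeff$; the arithmetic simplification of $(1+\lambda t) - 1$ against the $\dimcoeff/(\dimcoeff-1)$ prefactors produces the stated form.

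The main obstacle is purely bookkeeping: keeping the normalizing constants $C_{\dimcoeff}(\lambda)$, $C_{\dimcoeff-1}(\lambda)$ and the factor $1/\dimcoeff$ consistent across the Bayes-rule manipulations, and being careful that the boundary set $\{\abs{\coeff{\indcoeff}} = \abs{\coeff{j}}\}$ has Lebesgue measure zero so the cones genuinely partition $\R^{\dimcoeff}$ up to a null set and no probability mass is lost. No single step is analytically deep; the care lies in matching constants and in the measure-zero justification for the partition.
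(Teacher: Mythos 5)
Your proposal is correct and follows what is essentially the intended route: direct integration of the density over the cone $\cone_{\indcoeff}$ (where $\norminf{\Vcoeff}=\abs{\coeff{\indcoeff}}$) to get \eqref{eq:dominant_pdf}, \eqref{eq:dominant_proba_cond} and \eqref{eq:nondominant_pdf}, then Bayes' rule with Property~\ref{prop:dominant_proba} and the marginal \eqref{eq:marginalJustOneElement} for \eqref{eq:dominant_proba_cond_nondominant}, and the complementary decomposition for \eqref{eq:proba_marg_noncone}. The constants do check out (e.g.\ $2N/(\lambda C_{\dimcoeff}(\lambda))=1/C_{\dimcoeff-1}(\lambda)$), so no gap remains.
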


\begin{proof}
\ifallannexe
	See Appendix \ref{app:conditional}, paragraph \ref{app:conditional:sec:dominant_pdf}.
\else
	See Appendix~\cite[App. D]{Elvira2015TR} paragraph D-B.
\fi
\end{proof}

\begin{remark}
	 According to \eqref{eq:dominant_pdf}, the marginal distribution of the dominant component is a double-sided Gamma distribution. Conversely, according to \eqref{eq:nondominant_pdf}, the vector of the non-dominant components is marginally distributed according to a democratic distribution. Conditionally upon the dominant component, these non-dominant components are independently and uniformly distributed on the admissible set, as shown in \eqref{eq:dominant_proba_cond}. The probability in \eqref{eq:dominant_proba_cond_nondominant} shows that the probability that the $\indcoeff$th component dominates increases when the other components are of low amplitude.
\end{remark}

Finally, based on Lemma~\ref{lem:dominant_pdf}, the following property related to the conditional distributions of $\calD_{\dimcoef}(\lambda)$ can be stated.

\begin{property}
\label{prop:conditional}
 Let $\Vcoef = \left[\coef{1},\ldots,\coef{\dimcoef}\right]\transp$ be a random vector obeying the democratic distribution  $\calD_\dimcoef(\lambda)$. The pdf of the conditional distribution of a given component $\coef{\indcoef}$ given $\Vcoeff_{\backslash\indcoef}$ is
			\begin{multline}
			\label{eq:conditional}
				p\left(\coef{\indcoef} | \Vcoeff_{\backslash\indcoef}\right) =
				\left(1-c_\indcoeff\right)\frac{1}{2\norminf{\Vcoeff_{\backslash \indcoeff}}}\indfunc{\calI_{\indcoeff}}{\coef{\indcoef}}\\
				 + c_\indcoeff
				 \frac{\lambda}{2}e^{-\lambda\left(\abs{\coef{\indcoef}}+\norminf{\Vcoeff_{\backslash \indcoeff}}\right)}\indfunc{\mathbb{R}\backslash {\calI_{\indcoeff}}}{\coef{\indcoef}}
			\end{multline}
			with $\calI_{\indcoeff} \triangleq \left(-\norminf{\Vcoeff_{\backslash \indcoeff}},\norminf{\Vcoeff_{\backslash \indcoeff}}\right)$ and where $c_\indcoeff= \mathrm{P}\left[\Vcoeff \in \cone_{\indcoeff} | \Vcoef_{\backslash\indcoef}\right]$ is defined by \eqref{eq:dominant_proba_cond_nondominant}.
\end{property}

\begin{proof}
\ifallannexe
	See Appendix \ref{app:conditional}, paragraph \ref{app:conditional:sec:conditional}.
\else
	See Appendix \cite[App. D]{Elvira2015TR} paragraph D-C.
\fi
\end{proof}

\begin{remark}
\label{rem:conditional}
	The pdf in \eqref{eq:conditional} defines a mixture of one uniform distribution and two shifted exponential distributions with probabilities $1-c_\indcoeff$ and $c_\indcoeff\slash 2$, respectively. An example of this pdf is depicted in Fig \ref{fig:democratic_pdf_conditional_single}. This property opens the door to a natural random variate generator according to the democratic distribution through the use of a standard Gibbs sampler. This random generation strategy is detailed in paragraph \ref{subsubsec:generator_gibbs}.
\end{remark}

\begin{figure}[h!]
\centering
 \includegraphics[width=\figwidth]{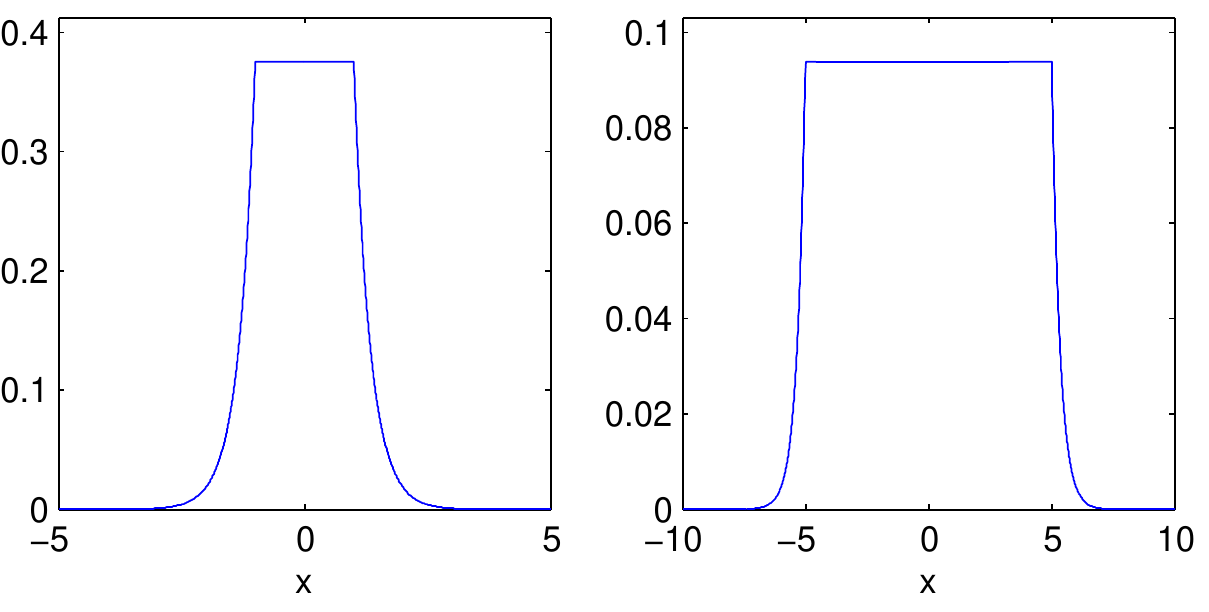}
\caption{Conditional distribution of $\coeff{\indcoeff}|\Vcoef_{\backslash\indcoef}$ when $\Vcoeff\sim\calD_\dimcoeff\left(\lambda\right)$ for $\dimcoeff=2$, $\lambda=3$ and $\norminf{\Vcoef_{\backslash\indcoef}}=1$ (left) or $\norminf{\Vcoef_{\backslash\indcoef}}=10$ (right).} \label{fig:democratic_pdf_conditional_single}
\end{figure}

\subsection{Proximity operator of the negative log-pdf}
\label{subsec:proximity_operator}

The pdf of the democratic distribution $\calD_{\dimcoeff}(\lambda)$ can be written as $p(\Vcoeff) \propto \exp\left(-g_1\left(\Vcoeff\right)\right)$ with
\begin{equation}
\label{eq:definition_g1}
   \quad g_1\left(\Vcoeff\right) = \lambda \norminf{\Vcoeff}.
\end{equation}
This paragraph introduces the proximity mapping operator associated with the negative log-distribution $g_1(\Vcoeff)$ (defined up to a multiplicative constant). This proximal operator will be subsequently resorted to implement Monte Carlo algorithms to draw samples from the democratic distribution $\calD_{\dimcoeff}(\lambda)$ (see paragraph \ref{subsec:random_variate_generation}) as well as posterior distributions derived from a democratic prior (see paragraph \ref{subsubsec:gibbs_description_vector_PMALA}). In this context, it is convenient to define the proximity operator of $g_1\left(\cdot\right)$ as \cite{moreau1962fonctions}
\begin{equation} \label{eq:def_proximal_operator}
\prox_{ g_1 }^{\delta} (\Vcoef ) = \underset{\bfu \in \R^{\dimcoef}}{\text{argmin}} \; \lambda\norminf{\bfu} + \frac{1}{2 \delta} \Vert \Vcoeff-\bfu \Vert_2^2
\end{equation}
Up to the authors' knowledge, no closed-form expression is available for \eqref{eq:def_proximal_operator}. However, this operator can be explicitly computed following the algorithmic scheme detailed in Algo.~\ref{alg:prox}, based on the following property.

\begin{property}
\label{prop:proximal}
Let $\Vcoef\in \R^{\dimcoef}$ and $\delta \in \R_{+}$. We denote $\varepsilon_1,\ldots,\varepsilon_J$ the $J$ distinct values among $\left\{\abs{\coeff{\indcoef}}\right\}_{\indcoeff=1}^\dimcoeff$ and $d_1,\ldots,d_J$ their respective multiplicity orders with $\sum_{j=1}^J d_j=\dimcoeff$. Then the $\indcoef$-th component of $\boldsymbol{\rho}\triangleq\prox_{ g_1 }^{\delta} (\Vcoef )$, denoted as $\rho_\indcoef$, is given by
\begin{equation} \label{eq:prox_solution}
\rho_{\indcoef} =
\left\{
  \begin{array}{ll}
    \mathrm{sign}(\coef{\indcoef}) \phi_{\delta}(\Vcoeff) & \hbox{\emph{if}  $\vert \coef{\indcoef} \vert \geq \phi_{\delta}(\Vcoeff)$} \\
    \coef{\indcoef} & \hbox{\emph{otherwise}}
  \end{array}
\right.
\end{equation}
where
\begin{equation}
\label{eq:prox_phi}
\phi_{\delta}(\Vcoeff) = \max\left(0, \phi_{\delta}^1(\Vcoef),\ldots,\phi_{\delta}^J(\Vcoef)  \right)
\end{equation}
and, for $j=1,\ldots,J$,
\begin{equation}
\label{eq:prox_phij}
   \phi_{\delta}^j(\Vcoef) = \frac{1}{\sum_{k=1}^j d_k} \left(\sum_{k=1}^j d_k \varepsilon_k - \lambda\delta\right).
\end{equation}
\end{property}

\begin{proof}
\ifallannexe
  See Appendix \ref{app:proximity}.
\else
  See Appendix \cite[App. E]{Elvira2015TR}
\fi
\end{proof}

\begin{algorithm} \label{alg:prox}
\caption{Algorithm to compute $\prox_{ g_1 }^{\delta} (\Vcoef )$}
\KwIn{$\Vcoef$, $\delta$}
\BlankLine
Identify $\varepsilon_{1}\dots \varepsilon_{J}$ as the $J$ different values of $\abs{\coef{\indcoeff}}$'s, and $d_1 \dots d_J$ their respective multiplicity order \;
\For{$j  \gets 1$ \KwTo $J$} {
	Compute $\phi_{\delta}^j(\Vcoef)$ following \eqref{eq:prox_phij} \;
}
Compute $\phi_{\delta}(\Vcoef) = \max \left(0, \phi_{\delta}^1(\Vcoef),\ldots,\phi_{\delta}^J(\Vcoef) \right)$ \;
\For{$\indcoef \gets 1$ \KwTo $\dimcoef$} {
	\eIf{$\abs{\coef{\indcoef}} \geq \phi_{\delta}(\Vcoeff)$}{
		$\rho_{\indcoef} = \mathrm{sign}(\coef{\indcoef}) \phi_{\delta}(\Vcoeff)$ \;
   }{
   		$\rho_{\indcoef} = \coef{\indcoef}$ \;
  }
}
Set $\boldsymbol{\rho}=\left[\rho_1,\ldots,\rho_\dimcoeff\right]\transp$ \;
\BlankLine
\KwOut{$\boldsymbol{\rho} = \text{prox}_{ g_1 }^{\delta} (\Vcoef ) $}
\end{algorithm}
From Property \ref{prop:proximal}, under this proximity mapping, all components greater than a threshold are reduced to a common value, while all the others remain unchanged.

\begin{remark}
  As stated earlier, this proximity operator will be resorted while designing Monte Carlo algorithms able to generate random samples according to distributions derived from $g_1(\cdot)$. In this specific context, almost surely, the multiplicity orders $d_j$ ($j=1,\ldots,J$) are all equal to one, i.e., $J=N$ and $\varepsilon_\indcoeff=\abs{\coef{\indcoeff}}$ ($\indcoeff=1,\ldots,\dimcoef$).
\end{remark}

\subsection{Random variate generation}
	\label{subsec:random_variate_generation}

This paragraph introduces three distinct random variate generators that allow to draw samples according to the democratic distribution.

\subsubsection{Exact random variate generator}
\label{subsubsec:generator_exact}

Property \ref{prop:dominant_proba} combined with Lemma \ref{lem:dominant_pdf} permits to rewrite the joint distribution of a democratically distributed vector according to the following chain rule
\begin{align*}
p(\Vcoeff) =& \sum_{\indcoef=1}^{\dimcoef} p\left(\Vcoeff_{\backslash \indcoef}|\coef{\indcoef},\Vcoeff \in \cone_{\indcoeff}\right) p\left(\coeff{\indcoeff}|\Vcoeff \in \cone_{\indcoeff}\right) \mathrm{P}\left[\Vcoeff \in \cone_{\indcoeff}\right]\\
=& \sum_{\indcoef=1}^{\dimcoef} \left[\prod_{j\neq n} p\left(\coeff{j}|\coeff{\indcoef}, \Vcoeff \in \cone_{\indcoeff}\right)\right] p\left(\coeff{\indcoeff}|\Vcoeff \in \cone_{\indcoeff}\right) \mathrm{P}\left[\Vcoeff \in \cone_{\indcoeff}\right]
\end{align*}
where $\mathrm{P}\left[\Vcoeff \in \cone_{\indcoeff}\right]$, $ p\left(\coeff{\indcoeff}|\Vcoeff \in \cone_{\indcoeff}\right)$ and  $p\left(\coeff{j}|\coeff{\indcoef}, \Vcoeff \in \cone_{\indcoeff}\right)$ are given in  \eqref{eq:dominant_proba}, \eqref{eq:dominant_pdf} and \eqref{eq:dominant_proba_cond}, respectively. This finding can be fully exploited to design an efficient and exact random variate generator for the democratic distribution, see Algo.~\ref{alg:exact_sampling_prior}.

\begin{algorithm} \label{alg:exact_sampling_prior}
\caption{Democratic random variate generator using an exact sampling scheme.}
\KwIn{Parameter $\lambda$, dimension $\dimcoef$}
\BlankLine
\emph{\% Drawing the cone of the dominant component}\\
Sample $\indcoef_{\mathrm{dom}}$ uniformly on the set $\{1 \dots \dimcoef \}$\;
\emph{\% Drawing the dominant component}\\
Sample $\coef{\indcoef_{\mathrm{dom}}}$ according to \eqref{eq:dominant_pdf}\;
\emph{\% Drawing the non-dominant components}\\
\For{$j \gets 1$ \KwTo $\dimcoef$ ($j \neq \indcoef_{\mathrm{dom}}$)}  {
	Sample $\coef{j}$ according to \eqref{eq:dominant_proba_cond}\;
}
\BlankLine
\KwOut{$\Vcoeff=\left[\coef{1},\ldots,\coef{\dimcoef}\right]^T \sim \calD_\dimcoef(\lambda)$}
\end{algorithm}

\subsubsection{Gibbs sampler-based random generator}
\label{subsubsec:generator_gibbs}

Property \ref{prop:conditional} can be exploited to design a democratic random variate generator through the use of a Gibbs sampling scheme. It consists of successively drawing the components $\coef{\indcoef}$ according to the conditional distributions \eqref{eq:conditional}, defined as the mixtures of uniform and truncated Laplacian distributions. After a given number $\Nbi$ of burn-in iterations, this generator, described in Algo.~\ref{alg:gibs_sampling_prior}, provides samples asymptotically distributed according to the democratic distribution $\calD_\dimcoef(\lambda)$.\\

\begin{algorithm} \label{alg:gibs_sampling_prior}
\caption{Democratic random variate generator using a Gibbs sampling scheme.}
\KwIn{Parameter $\lambda$, dimension $\dimcoeff$, number of burn-in iterations $\Nbi$, total number of iterations $\Nmc$, initialization $\Vcoeff^{(1,0)}=\left[\coef{1}^{(1,0)},\ldots,\coef{\dimcoef}^{(1,0)}\right]^\Nmc$}
\BlankLine
\For{$t  \gets 1$ \KwTo $\Nmc$} {
	\For{$\indcoef \gets 1$ \KwTo $\dimcoef$} {
        Set $\Vcoef_{\backslash\indcoef}^{(t,n-1)}=\left[\coef{1}^{(t,n)},\ldots,\coef{n-1}^{(t,n)},\coef{n+1}^{(t,n-1)},\ldots,\coef{\dimcoef}^{(t,n-1)}\right]^T$\;
        Draw $\coef{\indcoef}^{(t,n)}$ according to \eqref{eq:conditional}\;
		 Set $\Vcoef^{(t,\indcoef)}=\left[\coef{1}^{(t,\indcoef)},\ldots,\coef{\indcoef-1}^{(t,\indcoef)},\coef{\indcoef}^{(t,\indcoef)},\coef{n+1}^{(t,\indcoef-1)},\ldots,\coef{\dimcoef}^{(t-1,\indcoef-1)}\right]^T$\;
	}
    Set $\Vcoef^{(t+1,0)}= \Vcoef^{(t,\dimcoef)}$\;
}
\BlankLine
\KwOut{$\Vcoef^{(t,0)} \sim \calD_\dimcoef(\lambda)$  (for $t>\Nbi$)}
\end{algorithm}

\subsubsection{P-MALA-based random generator}
\label{subsubsec:generator_pmala}
An alternative to draw samples according to the democratic distribution is the proximal Metropolis-adjusted Langevin algorithm (P-MALA) introduced in \cite{Pereyra2015prox}. P-MALA builds a Markov chain $\left\{\Vcoef^{(t)}\right\}_{t=1}^{\Nmc}$ whose stationary distribution is of the form
$$ p(\Vcoef) \propto \exp(- g(\Vcoef)) $$
where $g$ is a positive convex function with $\lim\limits_{\Vert \Vcoef \Vert \to \infty} g(\Vcoef) = + \infty$. It relies on successive Metropolis Hastings moves with Gaussian proposal distributions whose mean has been chosen as the proximal operator of $g(\cdot)$ evaluated at the current state of the chain. In the particular case of the democratic distribution, P-MALA can be implemented by exploiting the derivations in paragraph \ref{subsec:proximity_operator}, where $g\left(\cdot\right)=g_1\left(\cdot\right)$ has been defined in \eqref{eq:definition_g1}. More precisely, at iteration $t$ of the sampler, a candidate $\Vcoef^*$ is proposed as
\begin{equation} \label{eq:prox_candidate_prior}
	\Vcoef^* \sachant \Vcoef^{(t-1)} \sim \mathcal{N} \left( \text{prox}_{g_1}^{\delta/2} \left(\Vcoef^{(t-1)}\right), \delta \mathbf{I}_{\dimcoef} \right).
\end{equation}
Then this candidate is accepted as the new state $ \Vcoef^{(t)} $ with probability
\begin{equation} \label{eq:prior:prox_arate}
	\alpha = \min \left( 1, \frac{ p\left( \Vcoef^* \sachant \lambda \right) }{ p\left( \Vcoef^{(t-1)} \sachant \lambda \right)} \frac{q\left( \Vcoef^{(t-1)} \sachant \Vcoef^* \right)} { q\left( \Vcoef^* \sachant \Vcoef^{(t-1)} \right) } \right)
\end{equation}
where $q\left( \Vcoef^* \sachant \Vcoef^{(t-1)} \right)$ is the pdf of the Gaussian distribution given by \eqref{eq:prox_candidate_prior}. The algorithmic parameter $\delta$ is empirically chosen such that the acceptance rate of the sampler lies between $0.4$ and $0.6$. The full algorithmic scheme is available in Algo. \ref{alg:pmala_sampling_prior}.

\begin{algorithm} \label{alg:pmala_sampling_prior}
\caption{Democratic random variate generator using P-MALA.}
\KwIn{Parameter $\lambda$, dimension $\dimcoeff$, number of burn-in iterations $\Nbi$, total number of iterations $\Nmc$, algorithmic parameter $\delta$, initialization $\Vcoeff^{(0)}$}
\BlankLine
\For{$t  \gets 1$ \KwTo $\Nmc$} {
    Draw $\Vcoef^* \sachant \Vcoef^{(t-1)} \sim \mathcal{N} \left( \text{prox}_{g_1}^{\delta/2} \left(\Vcoef^{(t-1)}\right), \delta \mathbf{I}_{\dimcoef} \right)$ \;
    Compute $\alpha$ following \eqref{eq:prior:prox_arate} \;
    Draw $w \sim \calU(0,1)$ \;
    \eIf{$w< \alpha$}{
		Set $\Vcoef^{(t)}=\Vcoef^*$ \;
   }{
   		Set $\Vcoef^{(t)}=\Vcoef^{(t-1)}$ \;
    }
    }
\BlankLine
\KwOut{$\Vcoef^{(t)} \sim \calD_\dimcoef(\lambda)$  (for $t>\Nbi$)}
\end{algorithm}

\subsubsection{Random generator performance comparison}
\label{subsubsec:generator_performance}

\begin{figure}[h!]
\centering
	\scalebox{.24}{
%
%
\definecolor{mycolor1}{RGB}{56,108,176}
\begin{tikzpicture}

\begin{axis}[%
width=6.027778in,
height=4.754167in,
at={(0, 0)},
scale only axis,
area legend,
xmin=0.5,
xmax=15.5,
xtick={ 1,  2,  3,  4,  5,  6,  7,  8,  9, 10, 11, 12, 13, 14, 15},
xticklabels={ 1,  2,  3,  4,  5,  6,  7,  8,  9, 10, 11, 12, 13, 14, 15},
ticklabel style={font=\Huge\sffamily},
ymin=0,
ymax=0.14,
ylabel=Empirical Autocorrelation,
ylabel style={at={(-0.13, 0.5)}, font=\Huge\sffamily},
]

\addplot[ybar,bar width=0.321481in,draw=black,fill=mycolor1] plot table[row sep=crcr] {%
1	0.12470809920181\\
2	0.0505082103502347\\
3	0.0472714945245232\\
4	0.0327649261140032\\
5	0.0275034106077464\\
6	0.0266025361293348\\
7	0.0259294848646049\\
8	0.0229649752062993\\
9	0.0217624003299555\\
10	0.0208727352515409\\
11	0.014959553247615\\
12	0.00888951007286367\\
13	0.00558962936541676\\
14	0.00386300420841023\\
15	0.00198398485511713\\
};
\end{axis}
\end{tikzpicture}%
%
%
\definecolor{mycolor1}{RGB}{56,108,176}
\begin{tikzpicture}

\begin{axis}[%
width=6.027778in,
height=4.754167in,
at={(0, 0)},
scale only axis,
area legend,
xmin=0.5,
xmax=15.5,
xtick={ 1,  2,  3,  4,  5,  6,  7,  8,  9, 10, 11, 12, 13, 14, 15},
ytick={0, 0.02, 0.04,  0.06,  0.08, 0.1, 0.12, 0.14},
ymin=0,
ymax=0.14,
ticklabel style={font=\Huge\sffamily},
]
\addplot[ybar,bar width=0.321481in,draw=black,fill=mycolor1] plot table[row sep=crcr] {%
1	0.0740446024138876\\
2	0.0658052728730348\\
3	0.0643304838210699\\
4	0.0449466847062629\\
5	0.0431893279579161\\
6	0.0428082083163659\\
7	0.0380277900405265\\
8	0.0271144798284669\\
9	0.0256017097037611\\
10	0.0171013262034661\\
11	0.0114187197729156\\
12	0.0113620451992157\\
13	0.0112048634748192\\
14	0.0042088949329269\\
15	0.000104045014103701\\
};
\end{axis}
\end{tikzpicture}
	\scalebox{.24}{
%
%
\definecolor{mycolor1}{RGB}{56,108,176}
\begin{tikzpicture}

\begin{axis}[%
width=6.027778in,
height=4.754167in,
at={(0, 0)},
scale only axis,
area legend,
xmin=0.5,
xmax=15.5,
xtick={ 1,  2,  3,  4,  5,  6,  7,  8,  9, 10, 11, 12, 13, 14, 15},
ymin=0,
ymax=0.14,
ylabel=Empirical Autocorrelation,
ylabel style={at={(-0.13, 0.5)}, font=\Huge\sffamily},
ticklabel style={font=\Huge\sffamily},
]
\addplot[ybar,bar width=0.321481in,draw=black,fill=mycolor1] plot table[row sep=crcr] {%
1	0.129341898932363\\
2	0.0890659305379718\\
3	0.0783248889530888\\
4	0.0650587967537061\\
5	0.0623416340528974\\
6	0.0571726304031337\\
7	0.0533348243776335\\
8	0.0495237988179056\\
9	0.0474608359770518\\
10	0.04664232995996\\
11	0.0391897581552833\\
12	0.0324774890291238\\
13	0.0134565534872958\\
14	0.0131884939681456\\
15	0.000832256784912182\\
};
\end{axis}
\end{tikzpicture}%
%
%
\definecolor{mycolor1}{RGB}{56,108,176}
\begin{tikzpicture}

\begin{axis}[%
width=6.027778in,
height=4.754167in,
at={(0, 0)},
scale only axis,
area legend,
xmin=0.5,
xmax=15.5,
xtick={ 1,  2,  3,  4,  5,  6,  7,  8,  9, 10, 11, 12, 13, 14, 15},
ytick={0, 0.02, 0.04,  0.06,  0.08, 0.1, 0.12, 0.14},
ymin=0,
ymax=0.14,
ticklabel style={font=\Huge\sffamily},
]
\addplot[ybar,bar width=0.321481in,draw=black,fill=mycolor1] plot table[row sep=crcr] {%
1	0.0785388650170329\\
2	0.0646132688742219\\
3	0.061339173722672\\
4	0.0564773510181939\\
5	0.0516152783889759\\
6	0.0451533824340402\\
7	0.0436659052432667\\
8	0.0415735794122206\\
9	0.0361634621000179\\
10	0.0264706192036673\\
11	0.0189760883106732\\
12	0.0126124530869837\\
13	0.00560739903162634\\
14	0.00245761170045515\\
15	0.00229098241419281\\
};
\end{axis}
\end{tikzpicture}
	\scalebox{.24}{
%
%
\definecolor{mycolor1}{RGB}{56,108,176}
\begin{tikzpicture}

\begin{axis}[%
width=6.027778in,
height=4.754167in,
at={(0, 0)},
scale only axis,
area legend,
xmin=0.5,
xmax=15.5,
xtick= {1,  2,  3,  4,  5,  6,  7,  8,  9, 10, 11, 12, 13, 14, 15},
ytick = {0, 0.1, 0.2, 0.3, 0.4, 0.5, 0.6, 0.7},
xlabel= Lag,
xlabel style={font=\Huge\sffamily},
ymin=0,
ymax=0.7,
ylabel=Empirical Autocorrelation,
ylabel style={at={(-0.13, 0.5)}, font=\Huge\sffamily},
ticklabel style={font=\Huge\sffamily}
]
\addplot[ybar,bar width=0.321481in,draw=black,fill=mycolor1] plot table[row sep=crcr] {%
1	0.583410288399931\\
2	0.320920600130457\\
3	0.177888275725154\\
4	0.10631441427074\\
5	0.0941664720440852\\
6	0.0808810253078735\\
7	0.0728680384934901\\
8	0.0707757461279154\\
9	0.0643384712101952\\
10	0.0518774974179639\\
11	0.0288736147461211\\
12	0.00788911919252772\\
13	0.0054989711029341\\
14	0.00510443593135421\\
15	0.00338891680131549\\
};
\end{axis}
\end{tikzpicture}%
%
%
\definecolor{mycolor1}{RGB}{56,108,176}
\begin{tikzpicture}

\begin{axis}[%
width=6.027778in,
height=4.754167in,
at={(0, 0)},
scale only axis,
area legend,
xmin=0.5,
xmax=15.5,
xtick={ 1,  2,  3,  4,  5,  6,  7,  8,  9, 10, 11, 12, 13, 14, 15},
ytick = {-0.01, 0, 0.1, 0.2, 0.3, 0.4, 0.5, 0.6, 0.7},
xlabel=Lag,
xlabel style={font=\Huge\sffamily},
ymin=0,
ymax=0.7,
ylabel ={\empty},
ylabel style={at={(-0.10, 0.5)}, font=\Huge\sffamily},
ticklabel style={font=\Huge\sffamily},
]
\addplot[ybar,bar width=0.321481in,draw=black,fill=mycolor1] plot table[row sep=crcr] {%
1	0.550283401144439\\
2	0.533571506956361\\
3	0.514144622836426\\
4	0.49890069081937\\
5	0.48365040274257\\
6	0.467481923707543\\
7	0.454798991838752\\
8	0.442177365563115\\
9	0.429909404333765\\
10	0.417781013277364\\
11	0.409194128817572\\
12	0.401579598886037\\
13	0.393053675694079\\
14	0.385356887005425\\
15	0.377537859158236\\
};
\end{axis}
\end{tikzpicture}
	\caption{First $15$ lags of the empirical autocorrelation function when generating $500$ samples drawn from $\calD_{\dimcoeff}(\lambda)$ using the exact (top), Gibbs sampler-based (middle) and P-MALA (bottom) variate generators with $\lambda=3$ for $\dimcoef=2$ (left) and $\dimcoef=50$ (right).} \label{fig:random_variate_generator:autocorrelation}
\end{figure}
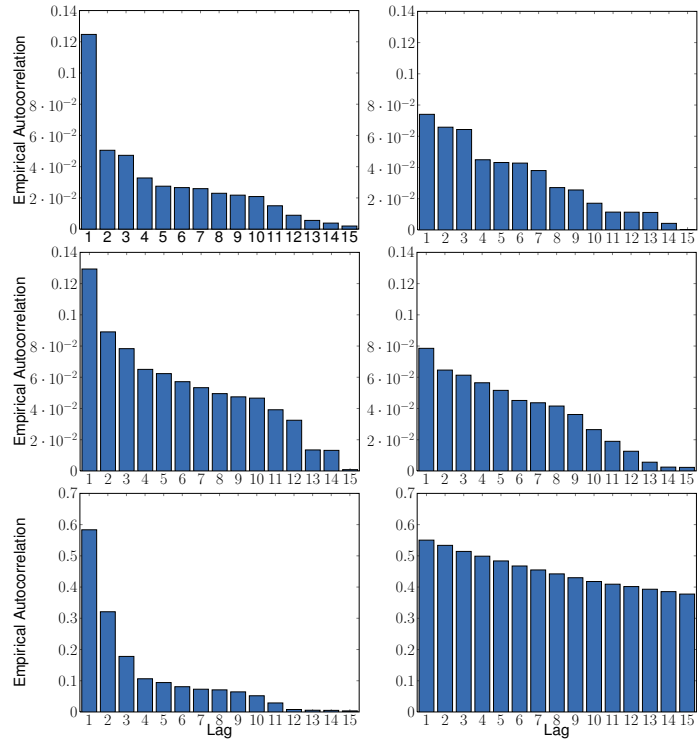

\begin{table}
	\caption{Cumulative times to draw $500$ samples according the democratic distribution $\calD_{\dimcoef}\left(\lambda\right)$ with $\lambda = 3$ for various $\dimcoef$.}
	\label{tab:time_sampling}
	\centering
    \begin{tabular}{ccccc}
  		\toprule[1.5pt]
		$\dimcoef$    & Exact (ms) & Gibbs (ms)  & P-MALA (ms)   \\
		\midrule
		$2 $   & $2.06 \times 10^{-1}$ & $2.05 \times 10^3$  & $1.11 \times 10^2$\\
		$5 $   & $3.30 \times 10^{-1}$ & $2.66 \times 10^3$  & $1.12 \times 10^2$\\
		$10$   & $3.46 \times 10^{-1}$ & $3.60 \times 10^3$  & $1.14 \times 10^2$\\
		$50$   & $9.24 \times 10^{-1}$ & $1.01 \times 10^4$  & $1.27 \times 10^2$\\
		$100$  & $1.71 \times 10^{ 0}$ & $1.85 \times 10^4$  & $1.46 \times 10^2$\\
		\bottomrule[1.5pt]
    \end{tabular}
\end{table}

Figure~\ref{fig:random_variate_generator:autocorrelation} compares the first $15$ lags of the empirical autocorrelation function (ACF), computed with $500$ samples drawn from the democratic distribution $\calD_{\dimcoeff}(3)$ for $\dimcoef=2$ (left) and $\dimcoef=50$ (right) using the exact (top), Gibbs sampler-based (middle) and P-MALA (bottom) variate generators. In lower dimensional cases, the chain generated with exact sampling has remarkably lower autocorrelation.

Computational times required to generate $1000$ samples from the democratic distribution for various dimensions are reported in Table~\ref{tab:time_sampling}. These results show that the Gibbs sampler-based method has a significantly higher cost when compared with the exact random generator. Whereas the exact sampler easily scales in higher dimension, the Gibbs based sampler needs approximately $10^4$ more time. This is explained by its intrinsic algorithmic structure: the Gibbs sampler-based method requires to draw a multinomial variable for each component, followed by either exponential or uniform distributed variables. Conversely, exact random generator only needs to generate one gamma distributed variable and $(\dimcoef-1)$ uniform samples.

From these findings, the Gibbs sampler-based and P-MALA strategies might seem out of interest since significantly outperformed by the exact random generator in terms of time computation and mixing performance. However, both exhibit interesting properties that can be exploited in a more general scheme. First, the Gibbs sampler-based generator shows that each component of a democratically distributed vector can be easily generated conditionally on the others. Then, P-MALA exploits the algorithmic derivation of the proximity operator associated with $g_1(\cdot)$ to draw vectors asymptotically distributed according to the democratic distribution. This opens the door to extended schemes for sampling according to a posterior distribution resulting from a democratic prior when possibly no exact sampler is available. This will be discussed in Section~\ref{sec:inverse_pb}.

\section{Democratic prior in a linear inverse problem}
\label{sec:inverse_pb}
This section aims to provide a Bayesian formulation of the model underlying the problem described by~\eqref{eq:problem}. From a Bayesian perspective, the solution of~\eqref{eq:problem} can be straightforwardly interpreted as the MAP estimator associated with a linear observation model characterized by an additive Gaussian residual and complemented by a democratic prior assumption. Assuming a Gaussian residual results in a quadratic discrepancy measure and, as a consequence, in a quadratic data fidelity term as in~\eqref{eq:problem}. Setting the anti-sparse coding problem into a fully Bayesian framework paves the way to a comprehensive statistical description of the solution. Moreover, it permits to implement other algorithmic strategies beyond MAP estimation, namely Markov chain Monte Carlo techniques.

\subsection{Hierarchical Bayesian model} \label{subsec:bayesian_model}

Let $\Vobs = \left[ \obs{1} \dots \obs{\dimobs} \right]\transp$ denote an observed measurement vector. These observations are assumed to be related to an unknown description vector $\Vcoef = \left[ \coef{1} \dots \coef{\dimcoef} \right]\transp$ through a known coding matrix $\MATop$ according to the linear model
\begin{equation} \label{eq:model}
	\Vobs  = \MATop \Vcoef  + \Vnoise.
\end{equation}
The residual vector $\Vnoise = \left[ \noise{1} \dots \noise{\dimcoef} \right]\transp$ is assumed to be distributed according to a centered multivariate Gaussian distribution $\mathcal{N}(\boldsymbol{0}_{\dimobs}, \noisevar \bsI_M)$. The Bayesian model is introduced in what follows. It relies on the definition of the likelihood function associated with the observation vector $\Vobs$ and on the choice of prior distributions for the unknown parameters, i.e., the representation vector $\Vcoef$ and the residual variance $\noisevar$, assumed to be a priori independent.

\subsubsection{Likelihood function}
The Gaussian property of the additive residual term yields the following likelihood function
\begin{equation} \label{eq:likelihood}
f(\Vobs | \Vcoef, \noisevar ) = \left( \frac{1}{2\pi \noisevar} \right)^{ \frac{\dimobs}{2} } \exp \left[ - \frac{1}{2 \noisevar} \normq{\Vobs - \MATop \Vcoef } \right].
\end{equation}

\subsubsection{Residual variance prior}

A noninformative Jeffreys prior distribution is chosen for the residual variance $\noisevar$
\begin{equation}
\label{eq:prior_noise}
	f\left(\noisevar\right) \propto \frac{1}{\noisevar}.
\end{equation}

\subsubsection{Description vector prior}

As motivated earlier, the democratic distribution introduced in Section \ref{sec:democratic} is assigned as the prior distribution of the $\dimcoef$-dimensional unknown vector $\Vcoef$
\begin{equation}
\label{eq:prior_representation}
\Vcoef \mid \demoparamini \sim \calD_\dimcoef(\demoparamini).
\end{equation}
In the following, the hyperparameter $\demoparamini$ is set as $\demoparamini=\dimcoef\demoparam$, where $\demoparam$ is assumed to be unknown. Enforcing the parameter of the democratic distribution to depend on the problem dimension allows the prior to be scaled with this dimension. Indeed, as stated in \eqref{eq:dominant_pdf}, the absolute value of the dominant component is distributed according to the Gamma distribution  $\calG\left(\dimcoef,\demoparamini\right)$, whose mean and variance are ${\dimcoef}\slash{\demoparamini}$ and  ${\dimcoef}\slash{\demoparamini^2}$, respectively. With the proposed scalability, the prior mean is constant with the dimension
\begin{equation}
	\mathrm{E}\left[|\coeff{\indcoeff}|\mid\Vcoeff \in \cone_{\indcoeff},\demoparam\right]={1}\slash{\demoparam}
\end{equation}
and the variance tends to zero
\begin{equation}
	\mathrm{var}\left[|\coeff{\indcoeff}|\mid\Vcoeff \in \cone_{\indcoeff},\demoparam\right]= {1}\slash{(\dimcoef\demoparam^2)}.
\end{equation}

\subsubsection{Hyperparameter prior}

The prior modeling introduced in the previous paragraph is complemented by assigning prior distribution to the unknown hyperparameter $\demoparam$, introducing a second level in the Bayesian hierarchy. More precisely, a conjugate Gamma distribution is chosen as a prior for $\demoparam$
\begin{equation}
\label{eq:prior_hyperparameter}
\demoparam \sim \mathcal{G}(a, b).
\end{equation}
since the conjugacy property allows the posterior distribution to be easily derived. The values of $a$ and $b$ will be chosen to obtain a flat prior.

\subsubsection{Posterior distribution}

The posterior distribution of the unknown parameter vector $\paramvect = \{ \Vcoeff,\noisevar,\demoparam \}$ can be computed from the following hierarchical structure:
\begin{equation} \label{eq:joint_short}
f(\paramvect \sachant \Vobs) \propto f(\Vobs \sachant \Vcoeff,\noisevar) f(\Vcoeff,\noisevar \sachant \demoparam)  f(\demoparam)
\end{equation}
where
\begin{equation}
f(\Vcoeff,\noisevar \sachant \demoparam)= f(\noisevar )  f(\Vcoef \sachant \demoparam)
\end{equation}
and $f(\Vobs \sachant \Vcoeff,\noisevar)$, $f(\noisevar )$, $f(\Vcoef \sachant \demoparam)$ and $f(\demoparam)$ have been defined in Eq. \eqref{eq:likelihood} to \eqref{eq:prior_hyperparameter}, respectively. Thus, this posterior distribution can be written as
\begin{eqnarray} \label{eq:joint_complete}
f(\Vcoef, \noisevar, \demoparam \sachant \Vobs) &\propto&   \exp \left( - \frac{1}{2\noisevar} \normq{\Vobs - \MATop \Vcoef} \right) \nonumber \\
&\times& \frac{1}{C_{\dimcoef}(\demoparam \dimcoef)} \exp \left( - \demoparam \dimcoef \norminf{\Vcoef} \right) \nonumber \\
& \times & \left(\frac{1}{\noisevar}\right)^{\frac{\dimobs}{2}+1} \boldsymbol{1}_{\R_+}(\noisevar) \nonumber\\
&\times& \frac{b^a}{\Gamma(b)} \demoparam^{a-1} \exp \left(- b \demoparam \right).
\end{eqnarray}
As expected, for given values of the residual variance $\noisevar$ and the democratic parameter $\demoparamini=\demoparam\dimcoef$, maximizing the posterior \eqref{eq:joint_complete} can be formulated as the optimization problem in \eqref{eq:problem}, for which some algorithmic strategies have been for instance introduced in \cite{Studer2015sub} and \cite{Fuchs2011asilomar}. In this paper, a different route has been taken by deriving inference schemes relying on MCMC algorithms. This choice permits to include the nuisance parameters $\noisevar$ and $\demoparam$ into the model and to estimate them jointly with the representation vector $\Vcoeff$. Moreover, since the proposed MCMC algorithms generate a collection $\left\{ \left( \Vcoef^{(t)}, \demoparam^{(t)}, \sigma^{2(t)} \right) \right\}_{t=1}^{N_{\mathrm{MC}}}$ asymptotically distributed according to the posterior of interest \eqref{eq:joint_short}, they provide a good knowledge of the statistical distribution of the solutions.

\subsection{MCMC algorithm}
\label{subsec:MCMC}
This section introduces two MCMC algorithms to generate samples according to the posterior \eqref{eq:joint_complete}. They are two specific instances of Gibbs samplers which generate samples according to the conditional distributions associated with the posterior \eqref{eq:joint_complete}, following Algo. \ref{alg:posterior_sampling}. As shown below, the steps for sampling according to the conditional distributions of the residual variance $f(\noisevar |\Vobs,\Vcoeff)$ and the democratic parameter $f\left(\demoparam|\Vcoeff\right)$ are straightforward. In addition, generating samples for the representation vector $f(\Vcoeff|\demoparam,\Vobs)$ can be achieved component-by-component using $\dimcoeff$ Gibbs moves, following the strategy in paragraph \ref{subsubsec:generator_gibbs}. However, for high dimensional problems, such a crude strategy may suffer from poor mixing properties, leading to slow convergence of the algorithm. To alleviate this issue, an alternative approach consists of sampling the full vector $\Vcoeff|\demoparam,\Vobs$ using a P-MALA step \cite{Pereyra2015prox}, similar to the one proposed in paragraph \ref{subsubsec:generator_pmala}. These two strategies are detailed in the following paragraphs.

\begin{algorithm} \label{alg:posterior_sampling}
\caption{Gibbs sampler}
\KwIn{Observation vector $\Vobs$, coding matrix $\MATop$, hyperparameters $a$ and $b$, number of burn-in iterations $\Nbi$, total number of iterations $\Nmc$, algorithmic parameter $\delta$, initialization $\Vcoeff^{(0)}$}
\BlankLine
\For{$t  \gets 1$ \KwTo $\Nmc$} {
    \emph{\% Drawing the residual variance}\\
	Sample $\noisesymb^{2(t)}$ according to \eqref{eq:hierarchical_gibbs:posterior_noisevar}. \;
    \emph{\% Drawing the democratic parameter}\\
	Sample $\demoparam^{(t)}$ according to \eqref{eq:hierarchical_gibbs:posterior_lambda}. \;
    \emph{\% Drawing the representation vector}\\
	Sample $\Vcoeff^{(t)}$ using, either (see paragraph \ref{subsubsec:gibbs_description_vector}) \label{alg:step_description_vector}
    \begin{itemize}
      \item Gibbs steps, i.e., following \eqref{eq:hierarchical_gibbs:posterior_coef_gibbs} \;
        \item P-MALA step, i.e., following \eqref{eq:prox_candidate} and \eqref{eq:prox_arate}\;
    \end{itemize}
}
\BlankLine
\KwOut{A collection of samples $\left\{\demoparam^{(t)},\noisesymb^{2(t)},\Vcoeff^{(t)}\right\}_{t=\Nbi+1}^{\Nmc}$ asymptotically distributed according to \eqref{eq:joint_complete}.}
\end{algorithm}

\subsubsection{Sampling the residual variance} Sampling according to the conditional distribution of the residual variance can be conducted according to the following inverse-gamma distribution
\begin{equation} \label{eq:hierarchical_gibbs:posterior_noisevar}
\noisevar \sachant \Vobs, \Vcoef \sim \mathcal{IG} \left( \frac{\dimobs}{2}, \frac{1}{2} \normq{ \Vobs - \MATop \Vcoef } \right)
\end{equation}

\subsubsection{Sampling the democratic hyperparameter} Looking carefully at \eqref{eq:joint_complete}, the conditional posterior distribution of the democratic parameter $\demoparam$ is
\begin{equation}
f(\demoparam \sachant \Vcoef) \propto \demoparam^{\dimcoef} \exp\left( - \demoparam \norminf{\Vcoef} \right) \demoparam^{a-1} \exp \left( - b \demoparam \right).
\end{equation}
Therefore, sampling according to $f(\demoparam \sachant \Vcoef)$ is achieved as follows
\begin{equation} \label{eq:hierarchical_gibbs:posterior_lambda}
\demoparam \sachant \Vcoeff \sim \mathcal{G}(a + \dimcoef, b+\dimcoef \norminf{\Vcoef})
\end{equation}

\subsubsection{Sampling the description vector}
\label{subsubsec:gibbs_description_vector}
Following the technical developments of paragraph \ref{subsec:random_variate_generation}, two strategies can be considered to generate samples according to the conditional posterior distribution of the representation vector $f(\Vcoeff|\demoparam,\noisevar,\Vobs)$. They are detailed below.

\paragraph{Component-wise Gibbs sampling}
\label{subsubsec:gibbs_description_vector_Gibbs}
A first possibility to draw a vector $\Vcoeff$ according to $f(\Vcoeff|\demoparam,\noisevar,\Vobs)$ is to successively sample according to the conditional distribution of each component given the others, namely, $f(\coeff{\indcoeff}|\Vcoeff_{\backslash \indcoeff},\demoparam,\noisevar, \Vobs)$, as in algorithm of paragraph \ref{subsubsec:generator_gibbs}. More precisely, straightforward computations yield the following $3$-mixture of truncated Gaussian distributions for this conditional
\begin{equation} \label{eq:hierarchical_gibbs:posterior_coef_gibbs}
		\coef{\indcoef} \sachant \Vcoef_{\backslash \indcoef}, \demoparam,  \noisevar , \Vobs
        \sim \sum_{i=1}^3 \omega_{i\indcoef} \mathcal{N}_{\calI_{i\indcoef}}\left( \mu_{i\indcoef}, s^2_{\indcoef} \right)
\end{equation}
where $\mathcal{N}_{\calI}(\cdot,\cdot)$ denotes the Gaussian distribution truncated on the $\calI$ and the truncation sets are defined as
\begin{eqnarray*}
  \calI_{1n} &= \left( -\infty, -\norminf{\Vcoef_{\backslash \indcoef}} \right)\\
  \calI_{2n} &= \left( -\norminf{\Vcoef_{\backslash \indcoef}}, \norminf{\Vcoef_{\backslash \indcoef}} \right)\\
  \calI_{3n} &= \left( \norminf{\Vcoef_{\backslash \indcoef}}, +\infty \right).
\end{eqnarray*}
The probabilities $\omega_{i\indcoef}$ ($i=1,2,3$) as well as the (hidden) means $\mu_{i\indcoef}$ ($i=1,2,3$) and variance $s^2_{\indcoef}$ of these truncated Gaussian distributions are given in Appendix. This specific nature of the conditional distribution is intrinsically related to the nature of the conditional prior distribution stated in Property \ref{prop:conditional}, which has already exhibited a $3$-component mixture: one uniform distribution and two (shifted) exponential distributions defined over $\calI_{2n}$, $\calI_{1n}$ and $\calI_{3n}$, respectively (see Remark \ref{rem:conditional}). Note that sampling according to truncated distributions can be achieved using the strategy proposed in \cite{Chopin2011}.

\paragraph{P-MALA}
\label{subsubsec:gibbs_description_vector_PMALA}
Similarly to the strategy developed in paragraph \ref{subsubsec:generator_pmala} to sample according to the prior distribution, sampling according to the conditional distribution $f(\Vcoeff|\demoparam,\noisevar,\Vobs)$ can be achieved using a P-MALA step \cite{Pereyra2015prox}. In this case, the distribution of interest can be written as
\begin{equation*}
  f(\Vcoeff|\demoparam,\noisevar,\Vobs) \propto \exp\left(g\left(\Vcoeff\right)\right)
\end{equation*}
where $g\left(\Vcoeff\right)$ derives from the Gaussian (negative log-) likelihood function and the (negative log-) distribution of the democratic prior so that
\begin{equation} \label{eq:hierarchical_Bayesian:prox:g}
	 g(\Vcoef) = \frac{1}{2\noisevar} \normq{\Vobs - \MATop \Vcoef} + \lambda \norminf{\Vcoef}
\end{equation}
with $\lambda = \demoparam\dimcoeff$. However, up to the authors' knowledge, the proximal operator associated with $g(\cdot)$ in \eqref{eq:hierarchical_Bayesian:prox:g} has no close form solution. To alleviate this problem, a first order approximation is considered\footnote{Note that a similar step is involved in the fast iterative truncation algorithm (FITRA) \cite{Studer2013jsac}, a deterministic counterpart of the proposed algorithm and considered in the next section for comparison.}, as recommended in \cite{Pereyra2015prox}
\begin{equation}
	\text{prox}_g^{\delta/2} (\Vcoef) \approx \text{prox}_{g_1}^{\delta/2} \left( \Vcoef + \delta \; \nabla \left[ \frac{1}{2\noisevar}\normq{\Vobs-\MATop\Vcoef} \right] \right)
\end{equation}
where $g_1(\cdot) = \lambda \norminf{\cdot}$ has been defined in paragraph \ref{subsubsec:generator_pmala} and the corresponding proximity mapping is available through Algo.~\ref{alg:prox}. Finally, at iteration $t$ of the main algorithm, sampling according to the conditional distribution $f(\Vcoeff|\demoparam,\noisevar,\Vobs)$ consists of drawing a candidate
\begin{equation} \label{eq:prox_candidate}
	\Vcoef^* \sachant \Vcoef^{(t-1)} \sim \mathcal{N} \left( \text{prox}_{g}^{\delta/2} \left(\Vcoef^{(t-1)}\right), \delta \mathbf{I}_{\dimcoef} \right)
\end{equation}
and to accept this candidate as the new state $ \Vcoef^{(t)} $ with probability
\begin{equation} \label{eq:prox_arate}
	\alpha = \min \left( 1, \frac{ f\left( \Vcoef^* \sachant \demoparam,\noisevar,\Vobs \right) }{ f\left( \Vcoef^{(t-1)} \sachant \demoparam,\noisevar,\Vobs \right)} \frac{q\left( \Vcoef^{(t-1)} \sachant \Vcoef^* \right)} { q\left( \Vcoef^* \sachant \Vcoef^{(t-1)} \right) } \right).
\end{equation}

\subsection{Inference} \label{subsec:inference}

The sequences $\left\{ \Vcoef^{(t)}, \sigma^{2(t)}, \mu^{(t)} \right\}_{t=1}^{\Nmc}$ generated by the MCMC algorithms proposed in paragraph~\ref{subsec:MCMC} are used to approximate Bayesian estimators. After a burn-in period of $N_{\mathrm{bi}}$ iterations, the set of generated samples $\calX = \left\{ \Vcoef^{(t)}\right\}_{t=\Nbi+1}^{\Nmc}$ is asymptotically distributed according to the marginal posterior distribution $f\left(\Vcoeff|\Vobs\right)$, resulting from the marginalization of the joint posterior distribution $f\left(\Vcoeff,\noisevar,\demoparam|\Vobs\right)$ in \eqref{eq:joint_complete} over the nuisance parameters $\noisevar$ and $\demoparam$
\begin{eqnarray}
  f\left(\Vcoeff|\Vobs\right) &=  & {\int} f\left(\Vcoeff,\noisevar,\demoparam|\Vobs\right)d\noisevar d\demoparam \\
            &\propto &\left\|\Vobs - \MATop \Vcoef\right\|_2^{-\frac{\dimobs}{2}} \left(b+\dimcoef\norminf{\Vcoeff}\right)^{-(a+\dimcoef)}.
\end{eqnarray}
As a consequence, while the minimum mean square error (MMSE) estimator of the representation vector $\Vcoeff$ can be approximated as an empirical average over the set $\calX$
\begin{eqnarray}
  \MMSE{\Vcoeff} &=& \mathrm{E}\left[\Vcoeff|\Vobs\right] \\
                & \simeq & \frac{1}{\Nr} \sum_{t=1}^{\Nmc} \Vcoeff^{(t)}. \label{eq:MMSE}
\end{eqnarray}
with $\Nr = \Nmc-\Nbi$, the marginal maximum a posteriori (mMAP) estimator can be approximated as
\begin{eqnarray}
  \MAPm{\Vcoeff} &=& \operatornamewithlimits{argmax}_{\Vcoeff\in{\mathbb{R}}^{\dimcoeff}} f\left(\Vcoeff|\Vobs\right)\\
                & \simeq & \operatornamewithlimits{argmax}_{\Vcoeff^{(t)} \in \calX} f\left(\Vcoeff^{(t)}|\Vobs\right). \label{eq:mMAP}
\end{eqnarray}

\section{Experiments}
\label{sec:experiments}
This section reports several simulation results to illustrate the performance of the Bayesian anti-sparse coding algorithms introduced in Section~\ref{sec:inverse_pb}. In paragraph~\ref{subsec:geweke_experiment}, the validity of the samplers derived in paragraph \ref{subsec:MCMC} has been assessed following the experimental scheme in \cite{g04}, exploiting some properties of the democratic distributions enounced in Section~\ref{sec:democratic}. Paragraph \ref{subsec:toy_example} evaluates the performances of the two versions of the samplers (i.e., using Gibbs or P-MALA steps) on a toy example, by considering measurements resulting from a representation vector whose coefficients are all equal up to a sign. Finally, paragraph \ref{subsec:perf_comparison} compares the performances of the proposed algorithm and its deterministic counterpart introduced in \cite{Studer2013jsac}. For all experiments, the coding matrices $\MATop$ have been chosen as randomly subsampled columnwise DCT matrices since they have shown to yield democratic representations with small $\ell_{\infty}$-norm and good democracy bounds \cite{Studer2015sub}. However, note that a deep investigation of these bounds is out of the scope of the present paper.

\subsection{Assessment of the Bayesian anti-sparse coding algorithms} \label{subsec:geweke_experiment}

We first consider a first experiment to assess the validity of the Monte Carlo algorithms introduced in paragraph \ref{subsec:MCMC} and, in particular, the two methods proposed to sample according to the conditional distribution of the representation vector $f\left(\Vcoeff|\Vobs,\noisevar, \demoparam\right)$, see step \ref{alg:step_description_vector} in Algo. \ref{alg:posterior_sampling}. To this aim, the so-called \emph{successive conditional sampling} strategy proposed by Geweke in \cite{g04} is followed for fixed nuisance parameters $\noisevar$ and $\demoparam$. This procedure does not fully assert the correctness of the sampler but it may help to detect errors, e.g., as in \cite{knowles2011}. More precisely, it consists of drawing a sequence $\left\{\Vcoef^{(t)}, \Vobs^{(t)}\right\}_{t=1}^{\Nmc}$ asymptotically distributed according to the joint distribution $f\left(\Vcoeff,\Vobs|\noisevar, \demoparam\right)$ using the Gibbs sampler described in Algo.~\ref{alg:gewekeExp}.

\begin{algorithm} \label{alg:gewekeExp}
\caption{Successive conditional sampling}
\KwIn{Residual variance $\noisevar$, democratic parameter $\mu$, coding matrix $\MATop$.}
\BlankLine
Sample $\Vcoeff^{(0)}$ according to $\calD_\dimcoef(\mu \dimcoef)$ \;
\For{$t \gets 1$ \KwTo $\Nmc$}  {
	Sample $\Vobs^{(t)} \sachant \Vcoef^{(t-1)}, \noisevar \sim \mathcal{N}(\MATop \; \Vcoef^{(t)}, \noisevar)$ \;
	Sample $\Vcoef^{(t)} \sachant \Vobs^{(t)}, \mu, \noisevar$ using, either
\begin{itemize}
  \item Gibbs steps, i.e., following \eqref{eq:hierarchical_gibbs:posterior_coef_gibbs} \;
  \item P-MALA step, i.e., following \eqref{eq:prox_candidate} and \eqref{eq:prox_arate}\;
\end{itemize}
}
\BlankLine
\KwOut{A collection of samples $\left\{\Vobs^{(t)},\Vcoeff^{(t)}\right\}_{t=\Nbi+1}^{\Nmc}$ asymptotically distributed according to $f\left(\Vobs,\Vcoeff|\noisevar,\mu\right)$}
\end{algorithm}

One can notice that this algorithm boils down to successively sample according to the Gaussian likelihood distribution $f\left(\Vobs|\Vcoeff,\noisevar, \demoparam\right)$ and the conditional posterior of interest $f\left(\Vcoeff|\Vobs,\noisevar, \demoparam\right)$. This later step is achieved using either the component-wise Gibbs sampler or P-MALA technique described in paragraph \ref{subsubsec:gibbs_description_vector}. Within this framework, the generated samples $\Vcoeff^{(t)}$ should be asymptotically distributed according to the prior democratic distribution $f\left(\Vcoeff|\demoparam\right)$. Thus they can be exploited to specifically assess the validity of this step, by resorting to the properties of this distribution, see Section \ref{sec:democratic}. In this experiment, we propose to focus on one of these properties: the absolute value of the dominant component $f\left(|\coeff{\indcoef}| \big| \demoparam , \coeff{\indcoef}\in \cone_{\indcoef} \right)$ follows the gamma distribution $\calG(\dimcoeff, \lambda)$  with $\lambda=\dimcoeff\demoparam$, see \eqref{eq:dominant_pdf}. Figure~\ref{fig:geweke_exp} (left) compares the theoretical pdf of this gamma distribution with the empirical pdfs computed from $\Nmc = 2\times 10^4$ samples generated by the Geweke's scheme with the Gibbs (top) and P-MALA (bottom) samplers, where $\dimobs = \dimcoef = 3$, $\mu=2$ and $\noisevar=0.25$. Figure~\ref{fig:geweke_exp} (right) shows the corresponding quantile-quantile (Q-Q) plots which tends to ascertain the validity of the two versions of the MCMC algorithm.

\begin{figure}
\centering
\resizebox{0.75\columnwidth}{!}{%
	\includegraphics{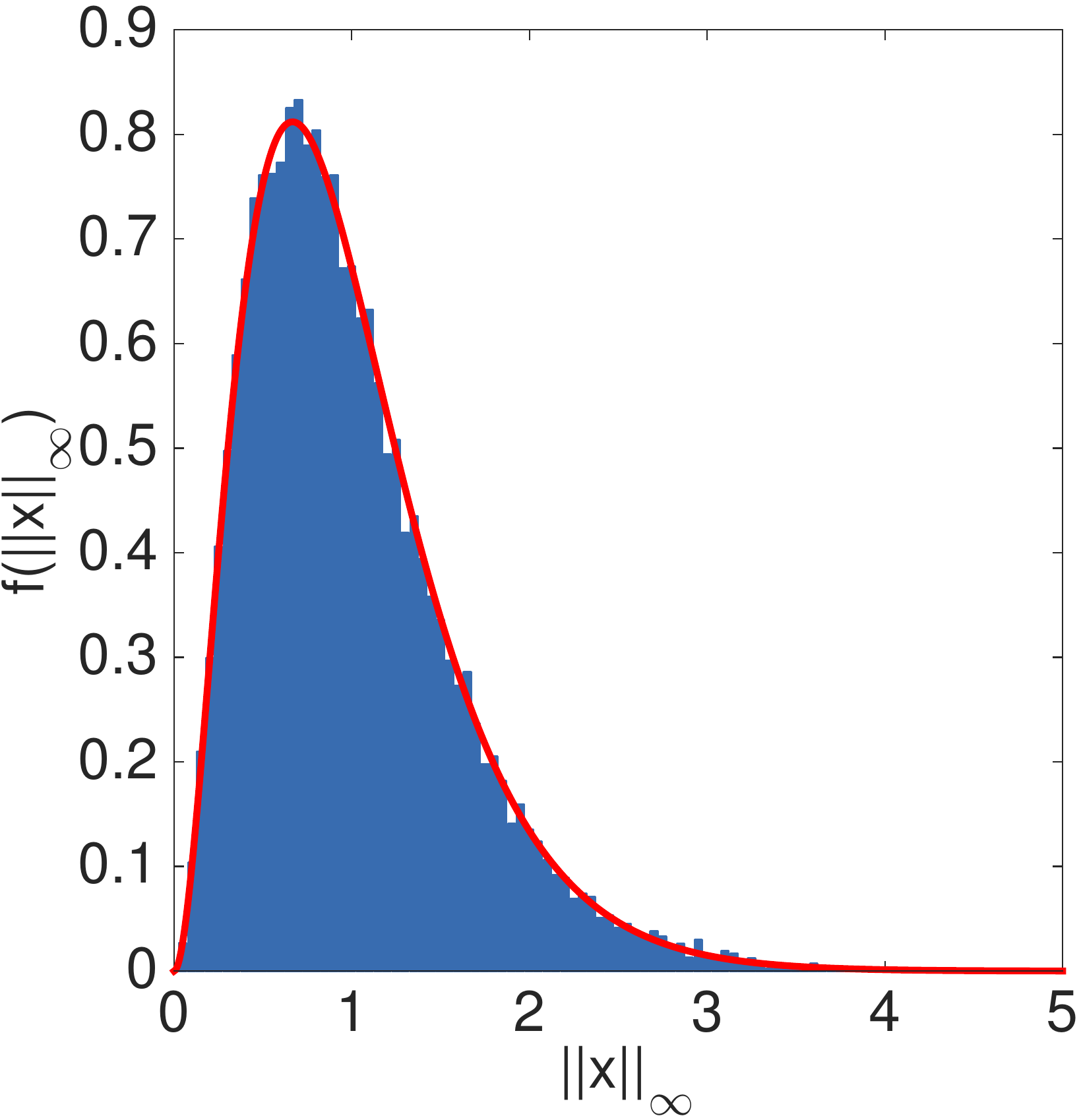} \includegraphics{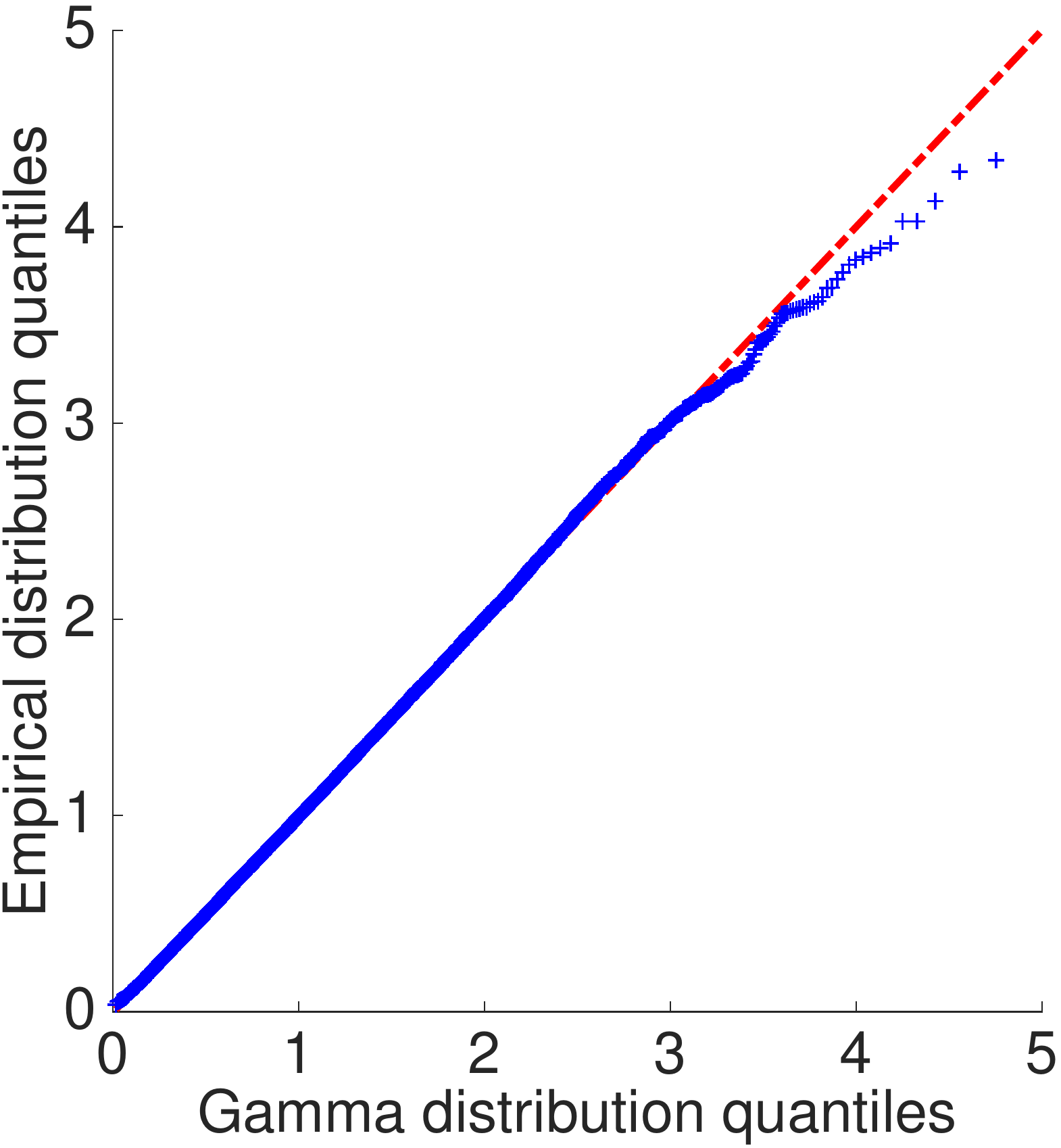}
 }
 \resizebox{0.75\columnwidth}{!}{%
	\includegraphics{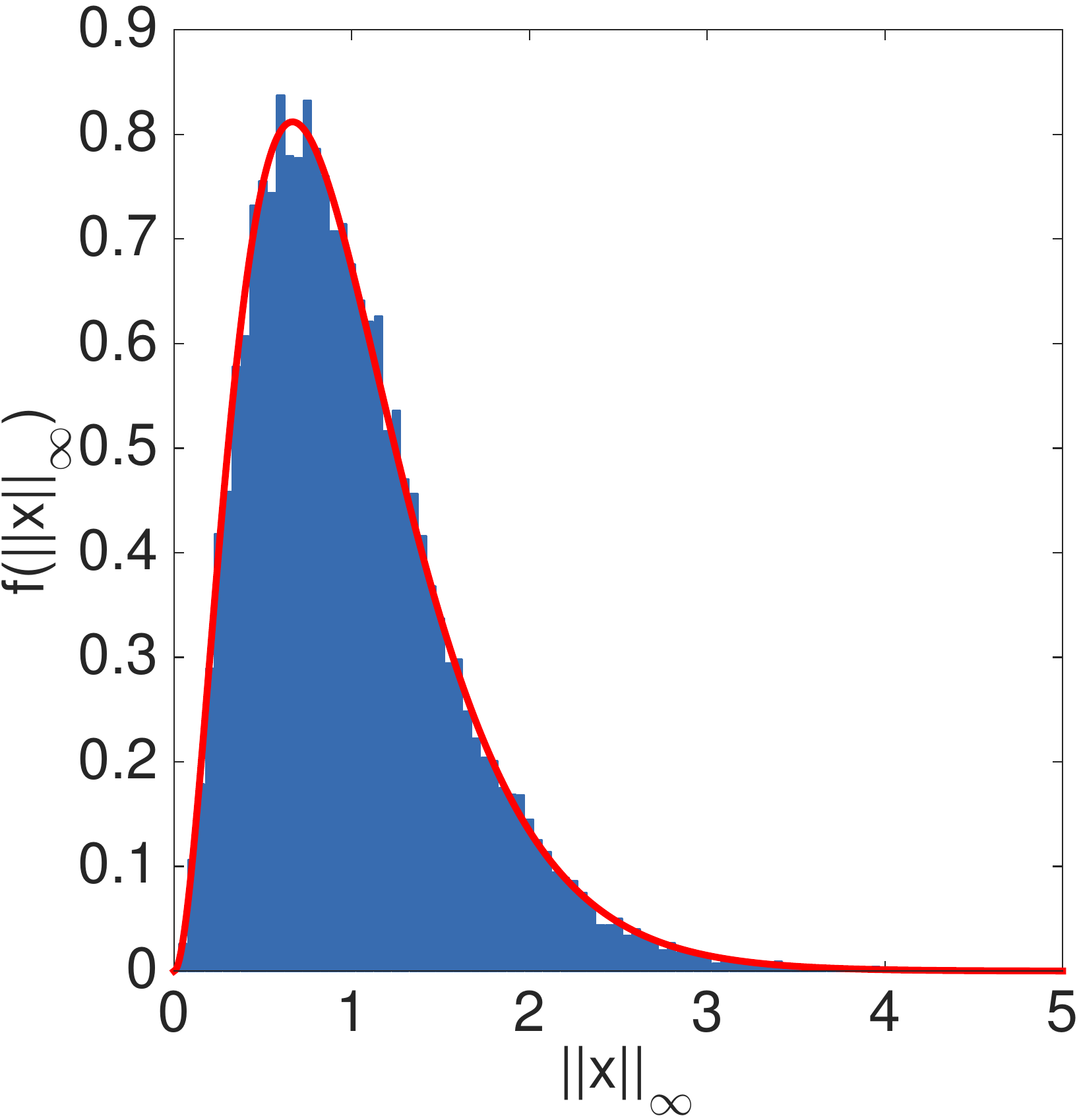} \includegraphics{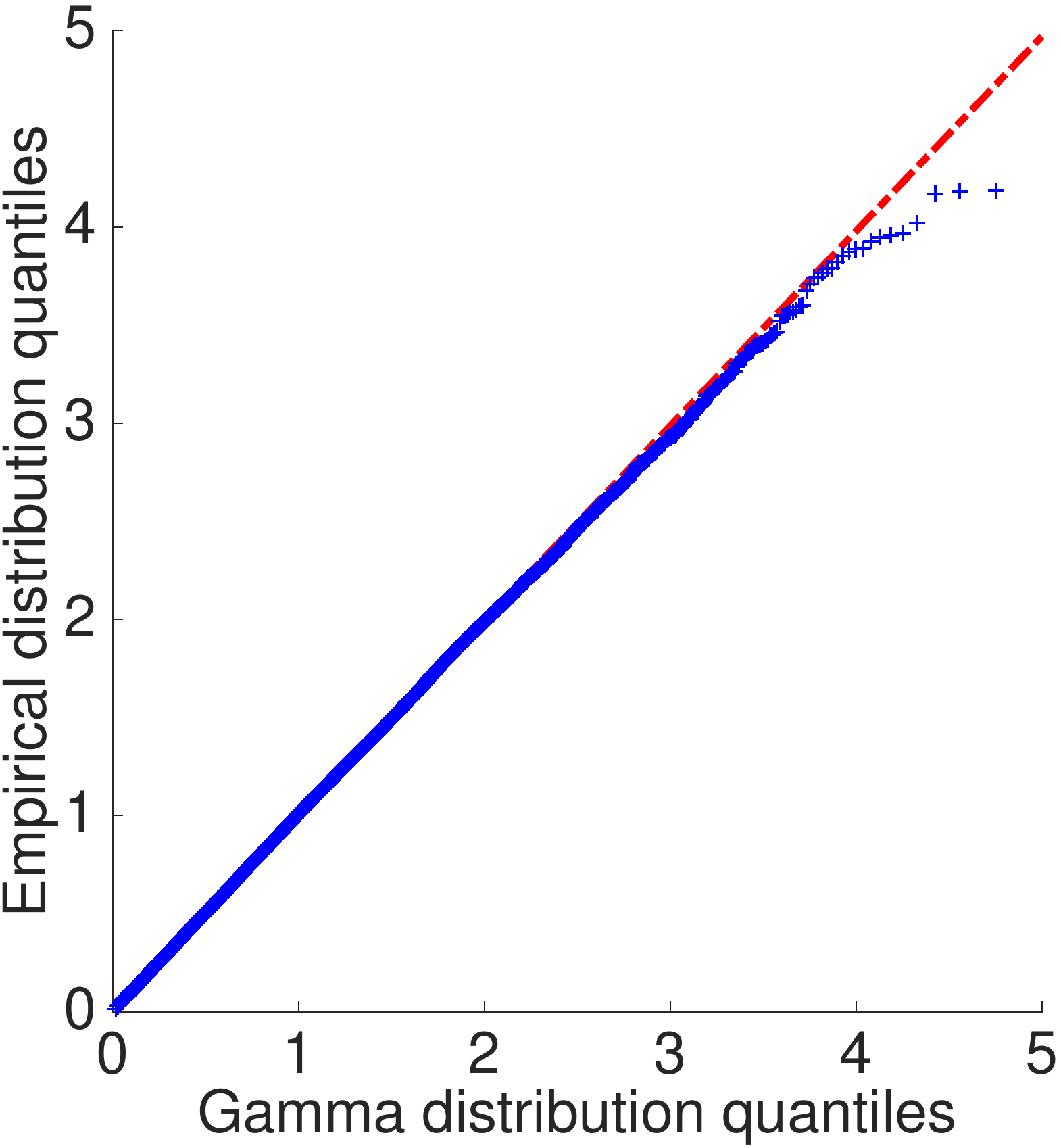}
 }
\caption{Left: theoretical Gamma pdf of the dominant component absolute value (red) and empirical pdf (blue) resulting from $2\times 10^4$ samples generated by successive conditional sampling (Algo. \ref{alg:gewekeExp}) with the Gibbs sampler (top) and P-MALA (bottom). Right: corresponding Q-Q plots.} \label{fig:geweke_exp}
\end{figure}

\subsection{Performance analysis on a toy example} \label{subsec:toy_example}

This paragraph focuses on a toy example to illustrate the convergence of the two versions of the proposed algorithm, i.e., based on Gibbs or P-MALA steps detailed in paragraphs \ref{subsubsec:gibbs_description_vector_Gibbs} and \ref{subsubsec:gibbs_description_vector_PMALA}, respectively. To this end, a simple experimental set-up with $\dimobs=\dimcoef=16$ has been considered where anti-sparse vectors $\Vcoef$ have been generated with coefficients randomly chosen among $\{-\dimcoef^{-1}, +\dimcoef^{-1}\}$. Observation vectors are then obtained following the free-noise forward model $\Vobs = \MATop \Vcoef$.

The proposed MCMC algorithms are used to generate samples $(\Vcoef^{(t)}, \sigma^{(t)}, \mu^{(t)})_{t=\Nbi}^{\Nmc}$ asymptotically distributed according to the joint posterior distribution \eqref{eq:joint_complete} with $\Nmc=3000$ iterations including $\Nbi=2000$ burn-in iterations. The MMSE and mMAP estimators of representation vector have been approximated from these samples following the strategy described in paragraph \ref{subsec:inference}. Two criteria have been used to evaluate the performance of these estimators
\begin{eqnarray}
	\snrx & = & 10 \log_{10} \frac{ \normq{\Vcoef} }{ \normq{\Vcoef - \hat{\Vcoef}} } \label{eq:def:SNRx} \\
	\text{PAPR} & = & \frac{ \dimcoef \norminf{\hat{\Vcoef}}^2 }{ \normq{\hat{\Vcoef}} } \label{eq:def:PAPR}
\end{eqnarray}
where $\hat{\Vcoef}$ refers to the MMSE or mMAP estimator of $\Vcoef$. The signal-to-noise ratio $\snrx$ measures the quality of the estimation with respect to the unknown (democratic) signal $\bfx$. Conversely, the peak-to-average power ratio PAPR quantifies anti-sparsity by measuring the ratio between the crest of the estimated signal and its average value. Note that the proposed algorithms do not aim at directly minimizing the PAPR: the use of a democratic distribution prior should promote anti-sparsity and therefore estimates with low PAPR.

\begin{figure}[h!]
\centering
 \includegraphics[width=0.75\columnwidth]{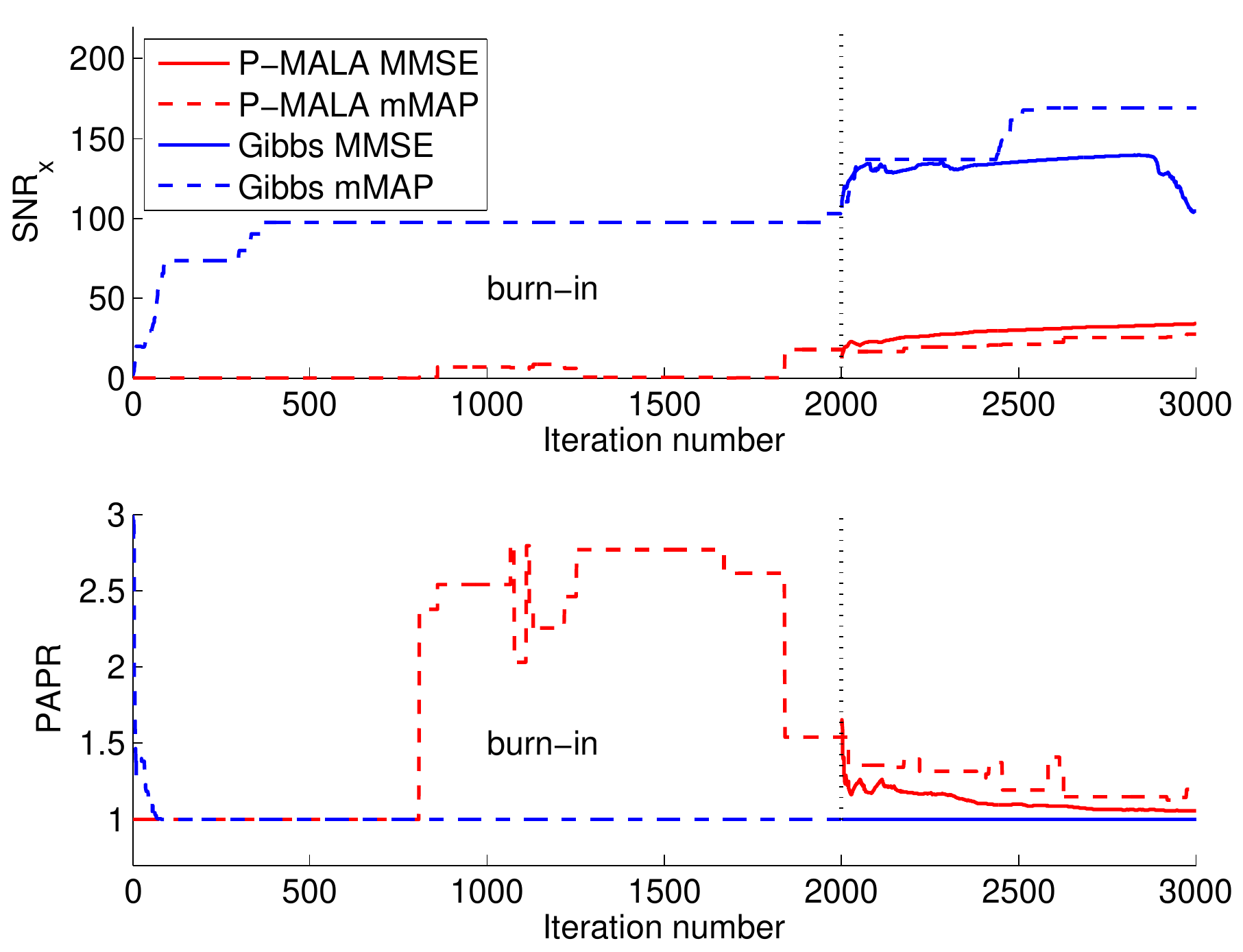}
\caption{As functions of the iteration number, $\text{SNR}_{\Vcoef}$ (top) and PAPR (bottom) associated with mMAP (dashed lines) and MMSE (continuous lines) estimates computed using the proposed algorithm based on Gibbs steps (blue) and P-MALA steps (red). The end of the burn-in period is localized with a vertical black dotted line.} \label{fig:toy_example}
\end{figure}

Figure~\ref{fig:toy_example} shows the evolution of both criteria through iterations for both versions of the algorithm. Note that, according to \eqref{eq:MMSE} and \eqref{eq:mMAP}, the mMAP estimators are approximated from all the generated samples while the MMSE estimates are only computed from samples generated after the burn-in period, located with a vertical line. The plots associated with the mMAP estimates show that, after less than $200$ iterations, the Gibbs sampler generates vectors with PAPR lower that $1.05$ and $\text{SNR}_{\Vcoef}$ higher than $75$dB. The \mmse~estimator computed from these samples quickly converges to similar results (after the burn-in period). Conversely, the estimators approximated from the samples generated using the P-MALA steps need $10$ times more iterations to converge towards solutions with PAPR around $1.2$ and $\text{SNR}_{\Vcoef}$ close to $40$dB. However, considering the computational time with a personal computer equipped with a $2.8$Ghz Intel i5 processor, the simulation of $3000$ samples requires $20$ seconds using Gibbs sampling and only  $2$ seconds using P-MALA steps. These observations highlight the fact that the algorithm based on P-MALA steps is much faster even though it needs more samples than the full Gibbs sampler to build robust estimators. To alleviate this limitation, the strategy adopted in the next experiments performs $20$ Metropolis-Hastings moves \eqref{eq:prox_arate} within a single iteration of the MCMC algorithm (as recommended in \cite{Pereyra2015prox}).

\subsection{Performance comparison} \label{subsec:perf_comparison}

\subsubsection{Experimental set-up} \label{subsubsec:perf_set_up}

In this experiment, the observation vector $\Vobs$ is composed of coefficients independently and identically distributed according to a Gaussian distribution, as in \cite{Studer2015sub}. The proposed MCMC algorithm is applied to infer the anti-sparse representation $\Vcoeff$ of this measurement vector $\Vobs$ with respect to the $\dimobs\times\dimcoeff$ coding matrix $\MATop$ for two distinct scenarios. Scenario 1 considers a small dimension problem with $\dimobs = 50$ and $\dimcoef = 70$. In Scenario 2, a higher dimension problem has been addressed, i.e., with $\dimobs = 128$ and $\dimcoef$ ranging from $128$ to $256$, which permits to evaluate the performance of the algorithm as a function of the ratio $\dimcoef/\dimobs$. In Scenario 1 (resp., Scenario 2), the proposed mMAP and MMSE estimators are computed from a total of $\Nmc=12\times10^3$ (resp., $\Nmc=55\times10^3$) iterations, including $\Nbi=10\times10^3$ (resp., $\Nbi=50\times10^3$) burn-in iterations. For this latest scenario, the algorithm based on Gibbs steps, see paragraph \ref{subsubsec:gibbs_description_vector_Gibbs}, has not been considered because of its computational burden, which experimentally justifies the interest of the proximal MCMC-based approach for large scale problems.

Algorithm performances have been evaluated over $20$ Monte Carlo simulations thanks to two figures-of-merit. The anti-sparse level of the recovered representation vector $\hat\Vcoef$ has been measured using the PAPR defined in \eqref{eq:def:PAPR}. Since the actual representation vector $\Vcoef$ is not available anymore (as in the previous experiment), the reconstruction error denoted $\snry$ and defined by
\begin{equation}
	\snry  = 10 \log_{10} \frac{ \normq{\Vobs} }{ \normq{\Vobs - \MATop \hat{\Vcoef}} }.
\end{equation}
is the second figure-of-merit.

The proposed algorithm is compared with a recent PAPR reduction technique, detailed in \cite{Studer2013jsac}. This fast iterative truncation algorithm (FITRA) is a deterministic counterpart of the proposed MCMC algorithm and solves the $\ell_{\infty}$-penalized least-squares problem \eqref{eq:problem}. Similarly to various variational techniques, FITRA needs the prior knowledge of the hyperparameters $\lambda$ (anti-sparsity level) and $\sigma^2$ (residual variance) or, equivalently, of the regularization parameter $\beta$ defined (up to a constant) as the product of the two hyperparameters, i.e., $\beta\triangleq2\lambda\sigma^2$. As a consequence, in the following experiments, this parameter $\beta$ has been chosen according to $3$ distinct rules. The first one, denoted \FITRAa, consists of applying FITRA with $\beta=2\MMSE{\lambda}\hat{\sigma}^2_{\mathrm{MMSE}}$, where $\MMSE{\lambda}$ and $\hat{\sigma}^2_{\mathrm{MMSE}}$ are the MMSE estimates obtained with the proposed P-MALA based algorithm. In the second and third configurations, the regularization parameter $\beta$ has been tuned to reach two solutions corresponding to either a target reconstruction error $\snry=20$dB (and free PAPR) or a target anti-sparsity level $\papr=1.5$ (and free $\snry$), denoted \FITRAc~and \FITRAd, respectively. For all these configurations, FITRA has been run with a maximum of $500$ iterations. Moreover, to illustrate the regularizing effect of the democratic prior (or, similarly, the $\ell_{\infty}$-penalization), the proposed algorithm and the $3$ configurations of FITRA have been finally compared with the least-squares (LS) solution as well as the MMSE and mMAP estimates resulting from a Bayesian model based on a Gaussian prior (or, similarly, an $\ell_{2}$-penalization).

\subsubsection{Results} \label{subsubsec:perf_comparison}

Table \ref{tab:lowDim} shows the results in Scenario 1 ($\dimobs = 50$ and $\dimcoef = 70$)  for all considered algorithms in terms of $\snry$ and $\papr$. For this scenario, the full Gibbs method needs approximately $6$ minutes while P-MALA needs $8$ seconds only. The \mapm~and the \mmse~estimates provided by P-MALA reach reconstruction errors of $\snry=23.7$dB and $\snry=22.4$dB, respectively. The \mapm~estimate obtained using the full Gibbs sampler performs quite similarly while numerous estimates from the Gibbs MMSE have converged to solutions that do not ensure correct reconstruction, which explains worse $\snry$ results. This is the signature of an unstable behavior of the Gibbs MMSE estimate. When using \FITRAa, solutions with similar $\snry$ but lower $\papr$ are recovered. Both MCMC and FITRA algorithms have provided anti-sparse representations with lower $\papr$ than LS or $\ell_2$-penalized solutions, which confirms the interest of the democratic prior or, equivalently, the $\ell_{\infty}$-penalization.

\begin{table}[h!]
\caption{Scenario 1: results in terms of $\snry$ and $\papr$ for various algorithms.} \label{tab:lowDim}
\centering

\begin{tabular}{l cc} 
  \toprule[1.5pt]
  	\headTab{} 	& $\snry$ & $\papr$ \\
  	\midrule
	P-MALA \mmse 			& 22.4 & 3.69 \\
	P-MALA \mapm 			& 23.7 & 2.82 \\
	Gibbs \mmse 			& 9.6 & 3.69 \\
	Gibbs \mapm 			& 12.7 & 2.82 \\
	\FITRAa 				& 21.8 & 1.53 \\
	\FITRAc 				& 19.9 & 1.86 \\
	\FITRAd 				& 9.3 & 1.5 \\
	LS 						& 306.4 & 7.27 \\
	Gaussian prior \mmse 	& 81.4 & 7.04 \\
	Gaussian prior \mapm 	& 154.6 & 6.92 \\
  \bottomrule[1.5pt]
\end{tabular}

\end{table}

Fig.~\ref{fig:compromis} displays the results  for a given realization of the measurement vector $\Vobs$ where the $\snry$ is plotted as a function of $\papr$. To provide a whole characterization of FITRA and illustrate the trade-off between the expected reconstruction error and anti-sparsity level, the solutions provided by FITRA corresponding to a wide range of regularization parameter $\beta$ are shown. The \mapm~and \mmse~solutions recovered by the two versions of the proposed algorithm are also reported in this $\snry$ vs. $\papr$ plot. They are located close to the critical region between solutions with low $\papr$ and $\snry$ and solutions with high $\papr$ and $\snry$: the proposed method recovers relevant solutions in an unsupervised way. While Gibbs estimates seem to reach a better compromise than P-MALA ones, we emphasize that the Gibbs sampler is in fact relatively unstable and therefore less robust. Moreover, the Gibbs sampler does not scale to high dimensions due to its prohibitive computational cost. \\

\begin{figure}[h!]
\centering
	\includegraphics[width=0.70\textwidth]{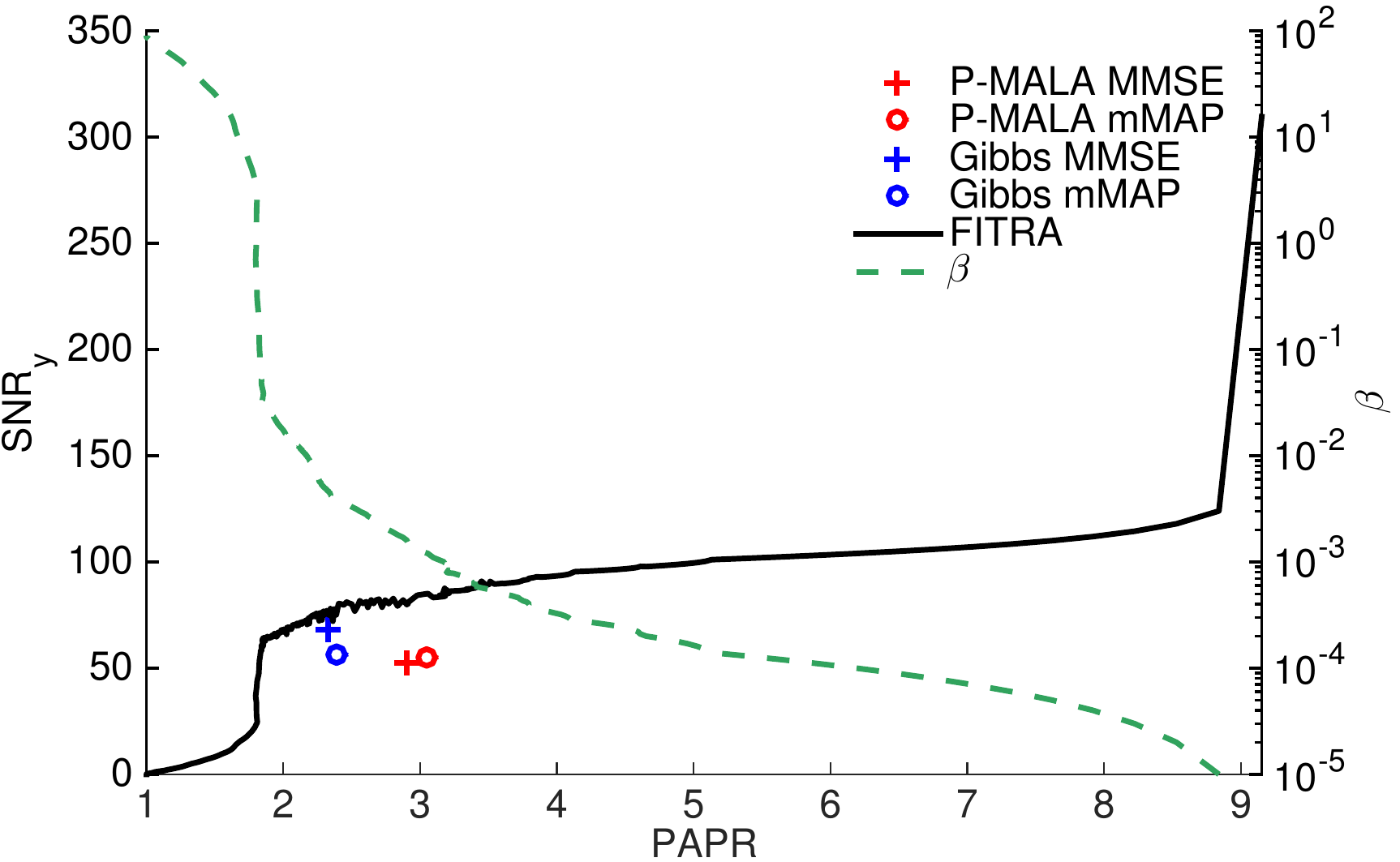}
\caption{Scenario 1: $\snry$ as a function of $\papr$. The path for the FITRA regularization parameter $\beta$ is depicted as green dashed line with the scale in the left $y$-axis.} \label{fig:compromis}
\end{figure}

Scenario 2 permits to evaluate the performances of the algorithm as a function of the ratio $\dimcoef / \dimobs$. For measurement vectors of fixed dimension $\dimobs=128$, the anti-sparse coding algorithms aim at recovering representation vectors of increasing dimensions $\dimcoef=128,\ldots,256$. As a consequence, for a given $\papr$ level of anti-sparsity, the $\snry$ is expected to be an increasing function of $\dimcoef / \dimobs$. Fig.~\ref{fig:prox_functionofRatioSNRy}  confirms this intuition since the performance of \FITRAd~in terms of $\snry$ (slightly) increases when $\dimcoef / \dimobs$ increases. Conversely, for a given level $\snry$ of reconstruction error, the $\papr$ is expected to be a decreasing function of this ratio. Fig.~\ref{fig:prox_functionofRatioPAPR} shows that the PAPR of \FITRAc decreases when $\dimcoef / \dimobs$ increases. Moreover, Fig.~\ref{fig:prox_functionofRatioSNRy} shows that, in term of $\snry$, the behavior of the P-MALA \mapm~and \mmse~estimates is similar to \FITRAa's. However, they are able to achieve lower $\papr$ once the ratio $\dimcoef / \dimobs$ is greater than $1.3$ and $1.4$, respectively, as illustrated in Fig. \ref{fig:prox_functionofRatioPAPR}.

\begin{figure}[h!]
\centering
	\includegraphics[width=0.70\textwidth]{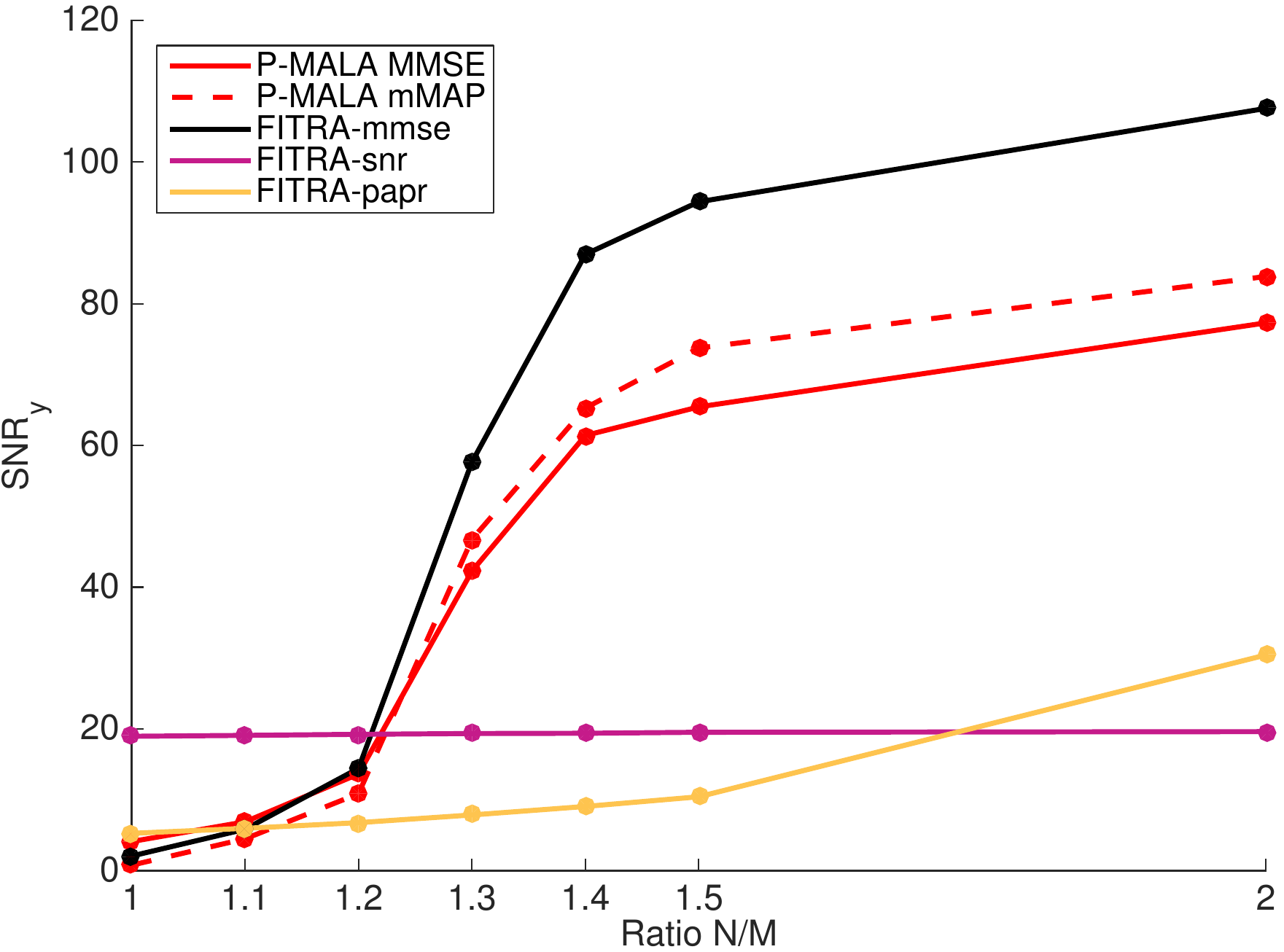}\\
\caption{Scenario 2: $\snry$ as a function of the ratio $\dimcoef / \dimobs$.} \label{fig:prox_functionofRatioSNRy}
\end{figure}

\begin{figure}[h!]
\centering
	\includegraphics[width=0.70\textwidth]{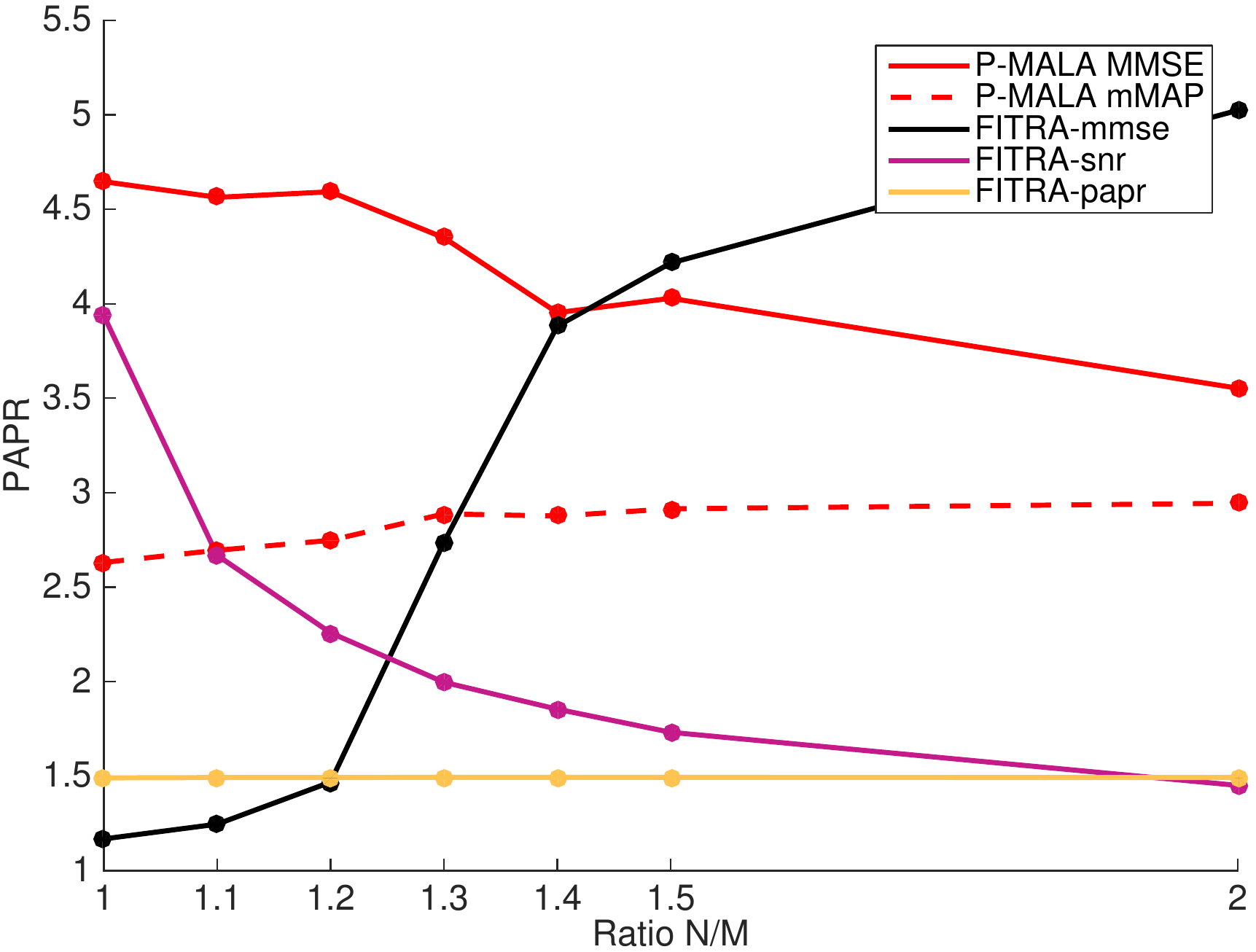}
\caption{Scenario 2: $\papr$ as a function of the ratio $\dimcoef / \dimobs$.} \label{fig:prox_functionofRatioPAPR}
\end{figure}

\section{Conclusion}
\label{sec:conclusion}

This paper introduced a fully Bayesian framework for anti-sparse coding of a given measurement vector on a known and potentially over-complete dictionary. To derive a Bayesian formulation of the problem, a new probability distribution was introduced. Various properties of this so-called \textit{democratic distribution} were exhibited, which permitted to design an exact random variate generator as well as two MCMC-based methods. This distribution was used as a prior for the representation vector in a linear Gaussian inverse problem, a probabilistic version of the anti-sparse coding problem. The residual variance as well as the anti-sparsity level were included in a fully Bayesian model, to be estimated jointly with the anti-sparse code. A Gibbs sampler was derived to generate samples distributed according to the joint posterior distribution of the coefficients of representation, the residual variance and the anti-sparse level. A second sampler was also proposed to scale to higher dimensions. To this purpose, the proximity mapping of the $\ell_{\infty}$-norm was considered to design a P-MALA within Gibbs algorithm. The generated samples were used to approximate two Bayesian estimators of the representation vector, namely the MMSE and mMAP estimators.

The validity of the proposed algorithms was first assessed following the successive conditional sampling scheme proposed in \cite{g04}. Then, they were evaluated through various experiments, and compared with FITRA a variational counterpart of the proposed MCMC algorithms. While fully unsupervised, they produced solutions comparable to FITRA in terms of reconstruction error and $\papr$, with the noticeable advantage to be fully unsupervised. In all experiments, as expected, the democratic prior distribution, was able to promote anti-sparse solutions of the coding problem. For that specific task, the mMAP estimator generally provided more relevant solutions than the \mmse~estimator. Moreover, the P-MALA-based algorithm seemed to be more robust than the full Gibbs sampler and had the ability to scale to high dimension problems, both in term of computational times and performances.

Future works include the unsupervised estimation of the coding matrix jointly with the sparse code. This would open the door to the design of encoding matrices that would ensure equal spreading of the information over their atoms. Furthermore, since the P-MALA based sampler showed promising results, it would be relevant to investigate the geometric ergodicity of the sampler. Unlike most of the illustrative examples considered in \cite{Pereyra2015prox}, this property can not be easily stated for the democratic distribution since it is not $\mathcal{C}^1$ but only $\mathcal{C}^0$.

\section*{Acknowledgments}
The authors would like to thank Dr. Vincent Mazet, Universit\'e de Strasbourg, France, for providing the code to draw according to truncated Gaussian distributions following \cite{Chopin2011}.

\appendix[Posterior distribution of the representation coefficients]
\label{app:posterior_coef}
The (hidden) mean and variances of the truncated Gaussian distributions involved in the mixture distribution \eqref{eq:hierarchical_gibbs:posterior_coef_gibbs} are given by
\begin{align}
	\mu_{1\indcoef} &= \frac{1}{\Vert \bfh_{\indcoef} \Vert^2} \left( \bfh_{\indcoef}^T \bfe_n + \noisevar \lambda \right)\nonumber \\
	\mu_{2\indcoef} &= \frac{1}{\Vert \bfh_{\indcoef} \Vert^2} \left( \bfh_{\indcoef}^T \bfe_n \right) \nonumber \\
	\mu_{3\indcoef} &= \frac{1}{\Vert \bfh_{\indcoef} \Vert^2} \left( \bfh_{\indcoef}^T \bfe_n - \noisevar \lambda \right)\nonumber
\end{align}
and
\begin{align}
	s_{\indcoef}^2 &= \frac{\noisevar}{\normq{ \bfh_{\indcoef} }}\nonumber
\end{align}
where $\bfh_i$ denotes the $i$th column of $\MATop$ and $\bfe_n =  \Vobs - \sum_{i \neq \indcoef} \coef{i} \bfh_i$. Moreover, the weights associated with each mixture component are
\begin{equation}
	\omega_{i\indcoef}  = \frac{u_{i\indcoef}}{ \sum_{j=1}^3 u_{j\indcoef} }
\end{equation}
with
\begin{align}
	u_{1\indcoef} &= \exp \left( \frac{\mu_{1\indcoef}^2}{2 s_{\indcoef}^2} + \lambda \norminf{\Vcoef_{\backslash \indcoef}} \right) \phi_{\mu_{1\indcoef}, s_{\indcoef}^2}\left(-\norminf{\Vcoef_{\backslash \indcoef}}\right)\nonumber \\
	u_{2\indcoef} &= \exp \left( \frac{\mu_{2\indcoef}^2}{2 s_{\indcoef}^2} \right) \nonumber \\
	& \quad \times \left[\phi_{\mu_{2\indcoef}, s_{\indcoef}^2}\left(\norminf{\Vcoef_{\backslash \indcoef}}\right) - \phi_{\mu_{2\indcoef}, s_{\indcoef}^2}\left(-\norminf{\Vcoef_{\backslash \indcoef}}\right)\right] \nonumber \\
	u_{3\indcoef} &= \exp \left( \frac{\mu_{3\indcoef}^2}{2 s_{\indcoef}^2} + \lambda \norminf{\Vcoef_{\backslash \indcoef}} \right) \nonumber \\
	& \quad \times \left(1 -\phi_{\mu_{3\indcoef}, s_{\indcoef}^2}\left(\norminf{\Vcoef_{\backslash \indcoef}}\right) \right) \nonumber
\end{align}
where $\phi_{\mu, s^2}(\cdot)$ is the cumulated distribution function of the normal distribution $\calN(\mu, s^2)$.

\bibliographystyle{IEEEtran}
\bibliography{strings_all_ref,all_ref,all_clement_ajout}

\end{document}